\documentclass{article} 
\usepackage{iclr2026_conference,times}

\usepackage{amsmath,amsfonts,bm}

\def\eqref#1{equation~\ref{#1}}
\def\Eqref#1{Equation~\ref{#1}}

\def\1{\bm{1}}

\DeclareMathAlphabet{\mathsfit}{\encodingdefault}{\sfdefault}{m}{sl}
\SetMathAlphabet{\mathsfit}{bold}{\encodingdefault}{\sfdefault}{bx}{n}

\usepackage{hyperref}
\usepackage{url}
\usepackage{natbib}
\usepackage{booktabs}
\usepackage{graphicx}
\usepackage{multirow}
\usepackage{algorithm}
\usepackage{algorithmic}
\usepackage{amsmath}
\usepackage{amssymb}
\usepackage{mathtools}
\usepackage{amsthm}
\usepackage{makecell} 
\usepackage{wrapfig}
\usepackage{multicol}
\usepackage{subcaption}
\usepackage{enumitem}

\usepackage{colortbl} 
\usepackage{xcolor}
\usepackage{tikz} 
\usepackage[framemethod=tikz]{mdframed}

\newmdtheoremenv[roundcorner=5pt, linewidth=1pt, linecolor=black,
  backgroundcolor=gray!15, innertopmargin=5pt,
  innerbottommargin=3pt]{theorem}{Theorem}

\newmdtheoremenv[roundcorner=5pt, linewidth=1pt, linecolor=purple!30,
  backgroundcolor=purple!4, innertopmargin=5pt,
  innerbottommargin=3pt]{remark}{Remark}

\hypersetup{
  colorlinks=true,
  linkcolor=[RGB]{30,140,70},   
  citecolor=[RGB]{30,140,70},   
  urlcolor=[RGB]{30,140,70},    
  filecolor=[RGB]{30,140,70},
  pdfborder={0 0 0}
}

\def \diag {\mbox{diag}}

  \def \bLambda {\mathbf{\Lambda}} 

 \def \bSigma {\boldsymbol{\Sigma}} \def \bmu {\boldsymbol{\mu}}
\def \btheta {\boldsymbol{\theta}} \def \blambda {\boldsymbol{\lambda}}

\def \X {\mathbf{X}} \def \S {\mathbf{S}} \def \A {\mathbf{A}}
 \def \I {\mathbf{I}} \def \U {\mathbf{U}}
 \def \Z {\mathbf{Z}} \def \D {\mathbf{D}}
\def \L {\mathbf{L}} \def \Y {\mathbf{Y}}

 \def \z {\mathbf{z}} 
\def \x {\mathbf{x}}  \def \a {\mathbf{a}}
\def \b {\mathbf{b}} \def \p {\mathbf{p}} \def \q {\mathbf{q}}
\def \c {\mathbf{c}}

\def \cC {{\mathcal C}} \def \cL {{\mathcal L}} \def \cN {{\mathcal N}}

\def \c {\mathbf{c}}

\def \u {\mathbf{u}}

\def\method{CoCo}

\title{Compactness and Consistency: A Conjoint Framework for Deep Graph Clustering}

\author{Wei Ju$^{1}$,
Siyu Yi$^{*2}$,
Kangjie Zheng$^{3}$,
Yifan Wang$^{4}$,
Ziyue Qiao$^{5}$,
Li Shen$^{6}$,
\textbf{Yongdao Zhou}\thanks{Corresponding Authors.}~~\!$^{7}$, \\
\textbf{Xiaochun Cao}$^{6}$\textbf{,} 
\textbf{Jiancheng Lv}$^{1}$ \\
$^1$College of Computer Science, Sichuan University \\
$^2$College of Mathematics, Sichuan University\\
$^3$Wellcome Sanger Institute $^4$University of International Business and Economics\\
$^5$School of Computing and Information Technology, Great Bay University \\
$^6$School of Cyber Science and Technology, Shenzhen Campus of Sun Yat-sen University\\
$^7$NITFID, School of Statistics and Data Science, Nankai University\\
{$\mathtt{juwei@scu.edu.cn,~\  siyuyi@scu.edu.cn,~\ ydzhou@nankai.edu.cn}$}\\
}

\iclrfinalcopy 
\begin{document}

\maketitle

\begin{abstract}
Graph clustering is a fundamental task in data analysis, aiming at grouping nodes with similar characteristics in the graph into clusters. This problem has been widely explored using graph neural networks (GNNs) due to their ability to leverage node attributes and graph topology for effective cluster assignments. However, representations learned through GNNs typically struggle to capture global relationships between nodes via local message-passing mechanisms. Moreover, the redundancy and noise inherently present in graph data may easily result in node representations lacking compactness and robustness. To address these issues, we propose a conjoint framework CoCo, which captures compactness and consistency in the learned node representations for deep graph clustering. Technically, our CoCo leverages graph convolutional filters to learn robust node representations from both local and global views, and then encodes them into low-rank compact embeddings, thus effectively removing the redundancy and noise as well as uncovering the intrinsic underlying structure. To further enrich the node semantics, we develop a consistency learning strategy based on compact embeddings to facilitate knowledge transfer from the two perspectives. Our experimental results indicate that our CoCo outperforms state-of-the-art counterparts on various datasets. The code is available at \url{https://github.com/juweipku/CoCo}.
\end{abstract}
\section{Introduction}
\label{sec::intro}

Graph clustering, as a critical task in network analysis and machine learning, plays a significant role in organizing and understanding complex relational data~\citep{liu2022survey}. It involves partitioning the nodes of a graph into clusters, aiming to group nodes that exhibit similar characteristics or share related patterns. Its application spans across various domains, including social network analysis, biological networks, and recommender systems, among others. By identifying cohesive groups of nodes, graph clustering provides valuable insights into the underlying structure and connections within the data, enabling more effective knowledge extraction and decision-making processes.

For the past decades, many efforts have been devoted to gaining a profound understanding of this fundamental problem, traditional methods typically rely on hand-crafted features~\citep{yan2006graph} or graph partitioning algorithms~\citep{ng2001spectral,vidal2011subspace}, which aim to project data samples into a low-dimensional space while incorporating constraints to ensure clear separation between the samples. However, the conventional training paradigm tends to yield unsatisfactory outcomes, as the limited model capacity restricts their potential, failing to fully exploit the abundant structural information contained in the graph. This has led us to study deep graph clustering, offering superior adaptability and expressive power by automatically extracting informative features from graphs~\citep{bo2020structural,ju2023glcc,liang2025concrete}.

Recently, graph neural networks (GNNs) have emerged as an effective approach, achieving remarkable success in capturing complex dependencies in graph-structured data and serving as a promising tool for deep graph clustering. Based on this strength, many researchers have explored the potential of GNNs for deep graph clustering~\citep{liu2022deep,yang2023cluster,yang2024graphlearner,liu2024revisiting}. For example, SDCN~\citep{bo2020structural} is the first to integrate structural information into deep clustering by bridging autoencoder representations with GCN layers through a delivery operator, while MAGI~\citep{liu2024revisiting} introduces a community-aware graph clustering framework that uses modularity maximization as a contrastive pretext task to uncover communities and mitigate semantic drift. In addition, DMGC-GTN~\citep{wang2025deep} explores a novel multi-modal graph clustering method that integrates structural and feature information via graph smoothing and transformer to exploit their complementarity.
These GNN-based methods offer a data-driven way to learn node representations inherently capturing node attributes and the relational dependencies among neighboring nodes, potentially leading to more meaningful cluster assignments.

Despite the tremendous success of previous methods, there still exist some inherent limitations. \textbf{\emph{First}, existing graph clustering methods often struggle to effectively capture global relationships among nodes without any intervention} ~\citep{chen2020measuring}. Since effective local message passing mechanisms~\citep{gilmer2017neural} in GNNs typically propagate information for only a few layers, they often overlook long-range dependencies, limiting their ability to capture the underlying node distribution and resulting in suboptimal clustering. For instance, existing works (MAGI and DMGC-GTN) typically only perform local augmentation or random walks on the original graph, which fails to capture longer-range dependencies and consequently prevents clusters from accurately representing the underlying community structure. \textbf{\emph{Second}, the  inherent redundancy and noise present in graph data poses challenges in learning compact and informative node representations}~\citep{kang2019robust}. Existing graph clustering methods often overlook the inherent redundancy and noise in data, which inevitably skews the training process and hinders the exploration of the underlying structure among data, thereby obscuring important relationships and patterns, and ultimately leading to less discriminative embeddings.

To address these challenges, we propose a conjoint framework capturing both \textbf{Co}mpactness and \textbf{Co}nsistency in the learned node representations, abbreviated as \method{}. Specifically, to explore both neighbor information and long-range relationships between nodes from local and global views respectively, our \method{} first leverages the graph convolutional filters to encode the node attributes and graph topology based on the original graph and graph diffusion matrix, thus effectively learning complementary node representations. Then, we encode node representations into low-rank compact embeddings, which learns the optimal low-dimensional subspace to characterize the intrinsic underlying structure, thereby fully eliminating redundancy and noise. To well facilitate the knowledge sharing of the compact embeddings from the two perspectives, we introduce a consistency learning strategy to encourage the model to produce consistent similarity distributions for each node, thus further enabling the learned node representations with richer semantics from both local and global information. Comprehensive experimental results across various graph datasets demonstrate the superior performance and effectiveness of our method over previous approaches. 


\vspace{-1mm}
\section{Methodology}
\label{sec::model}

\vspace{-1mm}
\noindent\textbf{Notations.} Let $\mathcal{G} =\{\mathcal{V}, \mathcal{E},\X,\A\}$ denote an attribute graph of $N$ nodes, where $\mathcal{V} = \left\{v_{1}, \cdots, v_{N}\right\}$ represents the set of nodes and $\mathcal{E} \subseteq  \mathcal{V} \times \mathcal{V}$ is the set of edges. 
Denote by $\X \in \mathbb{R} ^{N\times F}$ 
the attribute matrix of all nodes, where $F$ is the dimension of attributes. 
$\A  \in \{0,1\}^{N\times N}$ is the adjacency matrix, where $A_{ij} =1$ if $(v_i,v_j) \in \mathcal{E}$. The normalized adjacency matrix is denoted as $\tilde{\A} = \hat{\D}^{-1/2} \hat{\A} \hat{\D}^{-1/2}$, where $\hat{\A}$ equals to $\A+\I_N$ with added self-connections, 
and $\hat{\D}$ is the diagonal degree matrix with $\hat{D}_{ii} = \sum_{j=1}^N \hat{A}_{ij}$. 
Then the symmetric normalized graph Laplacian matrix is defined as $\tilde{\L} = \I_N - \tilde{\A}$. 

\noindent\textbf{Deep Graph Clustering.} The task of deep graph clustering is to partition an unlabeled graph with $N$ nodes into $C$ disjoint clusters, denoted as $\{\cC_1,\ldots,\cC_C\}$ based on a well-trained node representation matrix $\Z \in \mathbb{R}^{N\times D}$. In general, 
a self-supervised loss is developed 
to guide the training process to learn informative node representations. Then, a clustering algorithm such as $K$-means, spectral clustering, or a neural-network clustering layer, is performed on the trained node representations to output the clustering results. In this section, we present a novel framework \method{} for deep graph clustering. The complete framework is depicted in Figure~\ref{fig::framework}.

\begin{figure*}[t!]
\centering
\includegraphics[width=1\textwidth]{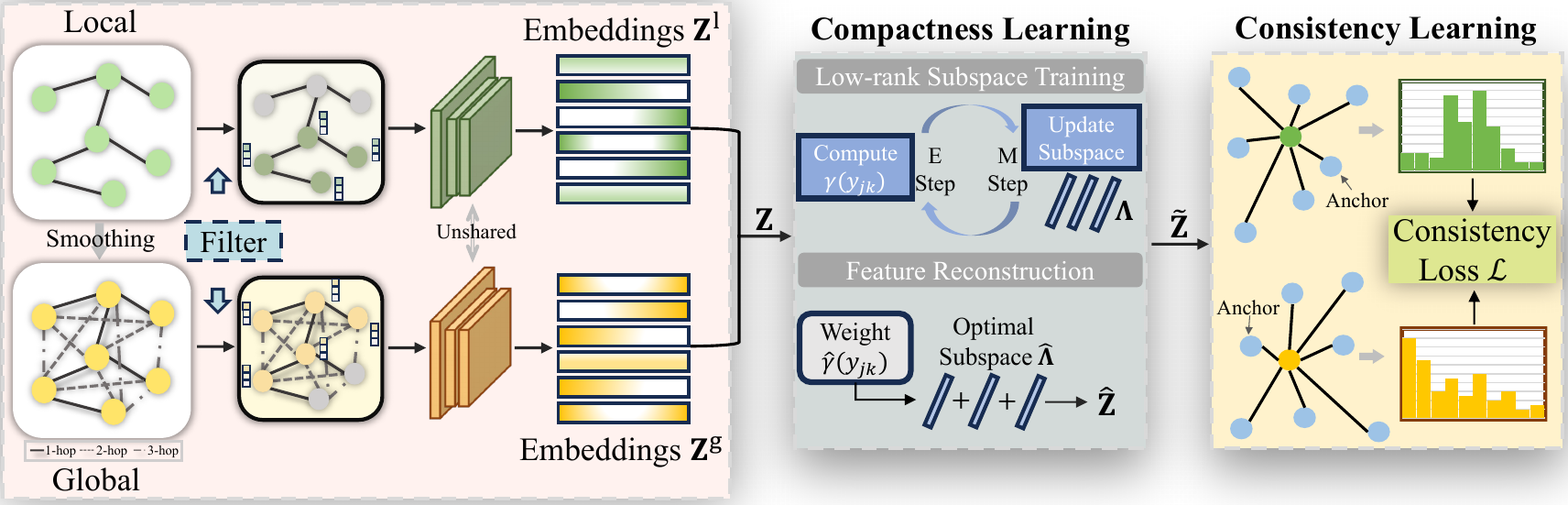}
\caption{Illustration of the proposed framework \method{}. 
}
\label{fig::framework}
\end{figure*}



\subsection{Local- and Global-view Feature Extraction}

To learn effective node representations, 
most existing methods employ graph convolution on the adjacency matrix, which propagates messages between one-hop neighbors. For capturing long-range neighbor information, the convolution layers are deepened, which inevitably leads to the over-smoothing issue, i.e., indistinguishable node representations of different clusters, due to the over-mixing of features and noises \citep{chen2020measuring}. To alleviate this issue and fully explore graph topology, we leverage graph diffusion in this paper to smooth out the neighborhood over the graph. Formally, the graph diffusion matrix $\S$ is defined as: 
$\S = \alpha(\I_N -(1-\alpha)\tilde{\A})^{-1}$, 
which adopts the personalized PageRank \citep{page1999pagerank} with teleport probability $\alpha \in (0,1)$. The elements in $\S$ measures the influence/correlation between all pairs of nodes. Compared with the adjacency matrix $\A$, $\S$ characterizes the soft relationships among the nodes, thus achieving the ability to globally exploit the long-range neighbor information. 
To reduce the computational complexity, there are fast approximations to achieves a linear runtime \citep{andersen2006local,wei2018topppr} and we also sparsify (e.g., set values below a certain threshold to zero) $\S$ to obtain $\S'$, and modify it as $\hat{\S} \triangleq (\S'+\S'^{\top})/2$ to maintain symmetry. In the following, we treat the tuples $\{\X,\A\}$ and $\{\X,\hat{\S}\}$ as the \emph{local} and \emph{global} views to retrieve different knowledge over the given graph.

Inspired by \cite{wu2019simplifying}, the entanglement of graph convolutional filter and weight matrix in GNN will harm both the performance and robustness when learning node representations. Instead, we adopt the disentangled architecture to encode the attribute and structure information from local and global views. First, we utilize the generalized Laplacian smoothing filters to denoise the high-frequency components and integrate node attributes and structure information: 
\begin{equation}\label{filtered_fea}
\tilde{\X}^{\text{l}}= (\I_N - \tilde{\L}^{\text{l}}/k^{\text{l}})^t \X,  \quad  \tilde{\X}^{\text{g}} = (\I_N - \tilde{\L}^{\text{g}}/k^{\text{g}})^t \X, 
\end{equation}
where $k^{\text{l}},k^{\text{g}} (>0)$ are real values and $t$ is the number of the filter layers. $\tilde{\L}^{\text{l}}(=\tilde{\L})$ and $\tilde{\L}^{\text{g}}$ are the local- and global-view normalized graph Laplacian matrix, in which $\tilde{\L}^{\text{g}}$ can be acquired like the expression of $\tilde{\L}$ except replacing $\A$ with $\S$. Theoretically, we can derive the following conclusion and the proof is shown in Appendix \ref{app::thm1}.
\begin{theorem}\label{thm1}
    $k^{\text{l}} = \tilde{\lambda}^{\text{l}}_{\text{max}}$ and $k^{\text{g}} = \tilde{\lambda}^{\text{g}}_{\text{max}}$ are the optimal choice to pursue low-pass filters, where $\tilde{\lambda}^{\text{g}}_{\text{max}}$ and $\tilde{\lambda}^{\text{l}}_{\text{max}}$ are the maximal eigenvalues of $\tilde{\L}^{\text{l}}$ and $\tilde{\L}^{\text{g}}$, respectively.
\end{theorem}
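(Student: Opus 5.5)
The argument works in the spectral domain of each normalized Laplacian and treats the local and global views identically, so I describe only one view with generic $\tilde{\L}$ and $k$. Since $\tilde{\L}=\I_N-\tilde{\A}$ is real symmetric, it has an eigendecomposition $\tilde{\L}=\U\bLambda\U^{\top}$ with eigenvalues $0=\lambda_1\le\cdots\le\lambda_N=\tilde{\lambda}_{\max}$. For the global view, $\hat{\S}$ is symmetric and nonnegative, so $\tilde{\L}^{\text{g}}$ has the same form and the same facts hold. I will also use $\tilde{\lambda}_{\max}<2$ (indeed $\le 2$, with strict inequality thanks to the self-loops), which follows from the standard bound on normalized adjacency matrices with self-loops.

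The filter $(\I_N-\tilde{\L}/k)^t$ then equals $\U\,\diag(p(\lambda_1),\ldots,p(\lambda_N))^t\,\U^{\top}$ with frequency response $p(\lambda)=1-\lambda/k$. I will make precise what "pursuing a low-pass filter" means, namely two requirements:
\begin{enumerate}[label=(\roman*)]
\item \emph{Low-pass validity.} $|p(\lambda)|$ is non-increasing on $[0,\tilde{\lambda}_{\max}]$ and $p(\lambda)\ge 0$ there. Otherwise negative responses flip sign and, under the power $t$, amplify high frequencies. This forces $1-\tilde{\lambda}_{\max}/k\ge 0$, i.e.\ $k\ge\tilde{\lambda}_{\max}$.
\item \emph{Maximal suppression.} Among valid filters we want the strongest damping of high-frequency components, i.e.\ the smallest possible responses $p(\lambda)^t$ for $\lambda>0$. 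Since $p(\lambda)=1-\lambda/k$ is increasing in $k$ for every fixed $\lambda>0$, this calls for the smallest admissible $k$.
\end{enumerate}
Combining (i) and (ii) pins down $k=\tilde{\lambda}_{\max}$. In that case the response decreases monotonically from $1$ at $\lambda=0$ to $0$ at the largest frequency, so the highest-frequency component is eliminated entirely.

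To make the optimality statement sharper, I would also rule out the alternatives explicitly. For $k<\tilde{\lambda}_{\max}$, we get $|p(\tilde{\lambda}_{\max})|=\tilde{\lambda}_{\max}/k-1>0$, and in the range $k<\tilde{\lambda}_{\max}/2$ this exceeds $1$, so high frequencies are amplified. Even for $k\in[\tilde{\lambda}_{\max}/2,\tilde{\lambda}_{\max})$, the response near $\tilde{\lambda}_{\max}$ is larger in magnitude than at intermediate frequencies, so it is not monotone and hence not low-pass. For $k>\tilde{\lambda}_{\max}$, every $\lambda>0$ is damped strictly less than under $k=\tilde{\lambda}_{\max}$. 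I would also note, as a remark, that the common choice $k=2$ (equivalently $\I_N-\tilde{\L}/2$) is a special case of this dominated regime, since $\tilde{\lambda}_{\max}<2$.

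The main obstacle is not computational but definitional: the paper never formalizes "optimal," so the proof hinges on fixing criteria (i) and (ii) convincingly. A secondary check is verifying the spectral bounds for $\tilde{\L}^{\text{g}}$ built from the sparsified, symmetrized $\hat{\S}$. Symmetry and nonnegativity with positive diagonal should suffice to give eigenvalues in $[0,2)$; this is the one point I would verify carefully.
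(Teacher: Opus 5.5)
Your proposal is correct and follows essentially the same route as the paper: diagonalize the filter so each frequency component is scaled by $(1-\tilde{\lambda}_i/k)^t$, use nonnegativity of these responses to force $k \ge \tilde{\lambda}_{\max}$, and then argue that a smaller $k$ damps higher frequencies more, so the boundary value $k=\tilde{\lambda}_{\max}$ is optimal. The paper motivates this through the Rayleigh quotient of the filtered signal but relies on the same two conditions, so your explicit pointwise-dominance comparison and the remark that $\tilde{\lambda}_{\max}<2$ (making $k=2$ a dominated choice) sharpen the paper's argument rather than replace it.
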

\vspace{-2mm}
Then, to make node representations trainable, we learn weight parameters by feeding the filtered features into two unshared multi-layer perceptrons ($\text{MLP}_1$ and $\text{MLP}_2$) for two views as: 
\begin{equation}\label{rep}
\Z^{\text{l}} = \text{MLP}_1(\tilde{\X}^{\text{l}}), \quad  \Z^{\text{g}} = \text{MLP}_2(\tilde{\X}^{\text{g}}). 
\end{equation}

\subsection{Compactness Learning for Redundancy Elimination} 
  
Low-rank representations can effectively exploit the inherent underlying correlation structure among data and suppress the impact of noise under the assumption that high-dimensional data points often intrinsically lie on a low-dimensional subspace 
\citep{ren2019learning,tang2019feature,jin2022expectation}. 
For self-supervised learning, to achieve better fusion of the semantics from the local and global views, we expect to utilize the same set of low-dimensional subspace to abstract and reconstruct the low-rank node representations of the two perspectives, which closes the gap between their semantic spaces and eliminates the redundancies in features, thereby uncovering the underlying data structure and yielding compact node representations from the two views.

\smallskip
\noindent\textbf{Low-rank Subspace Training.} Technically, we leverage the Gaussian mixture model (GMM, in Appendix \ref{app::EM}) \citep{richardson1997bayesian} to learn the optimal low-dimensional subspace that can represent the local- and global-view node embeddings in \Eqref{rep}. Let $\bar{N}=2N$,  denote the embedding space as $\Z=(\Z^{\text{l} \top}, \Z^{\text{g} \top})^{\top} \in  \mathbb{R}^{\bar{N} \times D}$, the learnable subspace as $\bLambda \in \mathbb{R}^{\bar{N} \times K}$ ($K \ll D$), the $j$-th column of $\Z$ as $\z_{\cdot,j}=(z_{1j},\ldots, z_{\bar{N}j})^{\top}$, and the $k$-th column of $\bLambda$ as $\blambda_{\cdot,k}=(\lambda_{1k},\ldots, \lambda_{\bar{N}k})^{\top}$. We introduce the latent variable matrix $\Y \in \mathbb{R}^{D \times K}$, where the element $y_{jk}$ in $\Y$ indicates whether $\z_{\cdot,j}$ is related to $\blambda_{\cdot,k}$. Under GMM, the goal is to maximize $\log p(\Z | \blambda)$ and the likelihood functions for the observed data $\Z$ and the complete data $\{\Z, \Y\}$ are proportionally formulated as: 
\begin{align*}
p(\Z| \blambda) & \propto \prod\nolimits_{j=1}^D \left( \sum\nolimits_{k=1}^K \cN(\z_{\cdot,j} | \blambda_{\cdot,k}, \sigma\I_{\bar{N}})\right), ~
p(\Z,\Y| \blambda) \propto \prod\nolimits_{j=1}^D\prod\nolimits_{k=1}^K 
\cN(\z_{\cdot,j} | \blambda_{\cdot,k}, \sigma\I_{\bar{N}})^{y_{jk}}, 
\end{align*}
where $\sigma$ is a hyper-parameter to adjust the normal distribution. 
By implementing the expectation-maximization (EM) algorithm \citep{dempster1977maximum}, introduced in Appendix \ref{app::EM} , we can obtain in the E step that the posterior probability of the latent variable ($p(\Y|\Z,\blambda)$) is calculated by: 
$$\gamma(y_{jk})= p(y_{jk}=1|\z_{\cdot,j}, \blambda_{\cdot,k}^{\text{old}}) = \frac{\cN(\z_{\cdot,j} | \blambda_{\cdot,k}^{\text{old}}, \sigma\I_{\bar{N}})}{\sum_{k'=1}^K \cN(\z_{\cdot,j} | \blambda_{\cdot,k'}^{\text{old}}, \sigma\I_{\bar{N}})};$$
and the posterior expectation $Q(\blambda, \blambda^{\text{old}})$ is: 
$Q(\blambda, \blambda^{\text{old}}) = E_{\Y|\Z,\blambda^{\text{old}}}(\log p(\Z,\Y|\blambda)).$
In the M step, we maximize $Q(\blambda, \blambda^{\text{old}})$ and the subspace is updated by: 
\begin{equation}\label{em_lambda}
\lambda_{ik}^{\text{new}} = \frac{1}{\sum\nolimits_{j=1}^D \gamma(y_{jk})} \sum\nolimits_{j=1}^D \gamma(y_{jk})z_{ij}.
\end{equation}
Here, the qualities of the Gaussian means $\boldsymbol \lambda_{\cdot,k}$ and the posteriors $\gamma(y_{jk})$ are critical. To simplify the model, we fix the mixture weights (priors) to be equal and the covariance matrices to be isotropic. It does not compromise the model's generality, since the equal prior does not affect the posterior trends, which are primarily data-driven. It also has advantages: (1) equal weights mitigate cluster collapse and promotes uniform coverage of the embedding space; and (2) fixing the covariance focuses the model's fitting capacity on the ``mean-defined subspace'', since based on the negative ELBO bound, maximizing the log-likelihood in the M-step is equivalent to minimizing the weighted squared-distance objective $\sum_{j,k} \gamma(y_{jk}) \|\mathbf z_{\cdot,j} - \boldsymbol \lambda_{\cdot,k}\|^2$. 
Theoretically, the iterative algorithm is guaranteed to converge \citep{dempster1977maximum}, as stated in Remark \ref{th1} and proven in Appendix \ref{app::th1}. 
In the experiment, we verify that the subspace searching algorithm can achieve good performance within 10 iterations across different datasets, incurring negligible additional computational cost.

\begin{remark}\label{th1}
In each iteration, we have $\log p(\Z|\blambda^{\text{new}}) \geq \log p(\Z|\blambda^{\text{old}})$. 
\end{remark}
\vspace{-2mm}
By repeatedly iterating the E step and M step until converging, the algorithm enforces the trained subspace $\bLambda$ to effectively represent the core characteristics of the embedded representation $\Z$, as only the principal and cluster-level directions captured by the GMM means are retained while correlated or weakly informative variations are removed. 
As such, the intrinsic data relationship is preserved in $\bLambda$ while removing the redundancy. Discussion on the comparison with other rank reduction ways can be found in Appendix \ref{app::dis}.

\smallskip
\noindent\textbf{Feature Reconstruction.} Further, to hold the main energy of the ``clean'' data~\citep{ren2019learning}, we perform data reconstruction to produce low-rank and compact representations stripped of redundancy. Concretely, we use the well-trained posterior probability of the latent variable $\hat{\gamma}(y_{jk})$ and the optimal subspace $\hat{\bLambda}=(\hat{\lambda}_{ik})$ to linearly reconstruct the original features, i.e., each entry in the reconstructed embedding $\hat{\Z}= (\hat{z}_{ij}) \in \mathbb{R}^{\bar{N} \times D}$ is formulated as: 
\begin{equation}
\label{recon}
\hat{z}_{ij} = \sum\nolimits_{k=1}^K \hat{\lambda}_{ik} \hat{\gamma}({y}_{jk}).
\end{equation} 
Since the noise or unstable fluctuations in the original embeddings cannot be expressed within the constrained subspace and thus vanish when reconstructing onto the original dimensions (mathematically, we have \(\text{rank}(\hat{\mathbf{Z}}) = \text{rank}(\hat{\boldsymbol{\Lambda}} \hat{\boldsymbol{\Gamma}}^{\top}) \le \min\{\text{rank}(\hat{\boldsymbol{\Lambda}}), \text{rank}(\hat{\boldsymbol{\Gamma}})\}= K\), which indicates that $\hat{\mathbf{Z}}$ maintains the low-rank property). 
Further, we argue in Theorem \ref{th2} that the proposed approach reconstructs embedding $\hat{\Z}$ in a way that optimally preserves individual information and total variation of the original embedding $\Z$. The proof is shown in Appendix \ref{app::th2}.
\begin{theorem}\label{th2}
    Under the low-rank feature reconstruction defined in \Eqref{recon}, the following two conservation properties hold:
    
        (1) \textbf{Individual Mass Conservation}: the aggregated information for each individual is preserved:
        \[
        \sum\nolimits_{j=1}^D z_{ij} = \sum\nolimits_{j=1}^D \hat{z}_{ij} \quad \text{for all } i \in \{1, \ldots, \bar{N}\}.
        \]
        (2) \textbf{Maximal Variation Preservation}: the reconstruction $\hat{\mathbf{Z}}$ maximally preserves the total variation of the original embedding $\mathbf{Z}$ among all low-rank factorizations. Specifically, it is the solution to the optimization problem:
        \[
        \hat{\mathbf{Z}} = \arg\min_{\mathbf{Z}' \in \mathbb{R}^{\bar{N} \times D}} \sum\nolimits_{i=1}^{\bar{N}} \sum\nolimits_{j=1}^D z_{ij} (z_{ij} - z'_{ij}) \quad \text{subject to } \mathbf{Z}' = \boldsymbol{\Lambda} \boldsymbol{\Gamma}^{\top},
        \]
        where $\boldsymbol{\Lambda} \in \mathbb{R}^{\bar{N} \times K}$ and $\boldsymbol{\Gamma} \in \mathbb{R}^{D \times K}$ with $K \ll D$.
\end{theorem}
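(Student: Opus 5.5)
Part (1) follows from two facts: the posteriors sum to one over the components, and the means are fixed points of the M step. Write $\hat\Gamma=(\hat\gamma(y_{jk}))\in\mathbb{R}^{D\times K}$; by the E step, $\sum_{k=1}^K\hat\gamma(y_{jk})=1$ for every $j$. Also set $n_k=\sum_{j}\hat\gamma(y_{jk})$. At convergence, \Eqref{em_lambda} gives $n_k\hat\lambda_{ik}=\sum_j\hat\gamma(y_{jk})z_{ij}$. Then
\[
\sum_{j=1}^D\hat z_{ij}=\sum_{k=1}^K\hat\lambda_{ik}\,n_k=\sum_{k=1}^K\sum_{j=1}^D\hat\gamma(y_{jk})z_{ij}=\sum_{j=1}^D z_{ij}\sum_{k=1}^K\hat\gamma(y_{jk})=\sum_{j=1}^D z_{ij}.
\]
So the only ingredients are the normalization of $\gamma$ and the M-step fixed-point equation.

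For (2), I read the admissible class $\Z'=\bLambda\boldsymbol\Gamma^\top$ as $\boldsymbol\Gamma$ ranging over row-stochastic soft assignments, with the ones produced by the E step being the relevant ones. The objective expands as $\sum_{ij}z_{ij}^2-\sum_{ij}z_{ij}z'_{ij}$. Maximizing $\langle\Z,\Z'\rangle$ is unbounded over all real $\bLambda$, so some normalization is needed. I would follow the paper's own remark that the M step minimizes $\sum_{j,k}\gamma(y_{jk})\|\z_{\cdot,j}-\blambda_{\cdot,k}\|^2$.

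The key algebraic identity is
\begin{align*}
\sum_{j,k}\gamma_{jk}\|\z_{\cdot,j}-\blambda_{\cdot,k}\|^2
&=\sum_j\|\z_{\cdot,j}\|^2-2\langle\Z,\bLambda\boldsymbol\Gamma^\top\rangle+\sum_k n_k\|\blambda_{\cdot,k}\|^2 .
\end{align*}
Substituting the optimal $\blambda_{\cdot,k}$ from \Eqref{em_lambda} gives $n_k\|\blambda_{\cdot,k}\|^2=\langle\blambda_{\cdot,k},\sum_j\gamma_{jk}\z_{\cdot,j}\rangle$. Summing over $k$, the last term equals $\langle\Z,\bLambda\boldsymbol\Gamma^\top\rangle$, so the weighted distortion at the optimum equals exactly $\sum_{ij}z_{ij}(z_{ij}-\hat z_{ij})$.

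With this identity the argument runs as follows:
\begin{itemize}
\item For fixed $\boldsymbol\Gamma$, the M step minimizes the convex quadratic distortion in $\bLambda$.
\item The E-step posteriors, in the small-$\sigma$ limit or via the ELBO argument of Remark~\ref{th1}, minimize the distortion over $\boldsymbol\Gamma$.
\item Hence $\hat\Z$ minimizes the distortion, which by the identity is the stated objective evaluated along the profile $\bLambda=\bLambda(\boldsymbol\Gamma)$, i.e.\ over admissible factorizations.
\end{itemize}
The rank bound $\operatorname{rank}(\hat\Z)\le K$ is already established in the text.

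The main obstacle is making the optimization class precise so that the argmin claim is actually true. The unconstrained claim fails by scaling, so I would restrict explicitly to factorizations in which $\bLambda$ is the $\boldsymbol\Gamma$-weighted mean (the profile constraint). I would then state that, on this class, the objective coincides with the weighted distortion that EM monotonically decreases (Remark~\ref{th1}), and that $\hat\Z$ attains a stationary point, or the minimum in the hard-assignment regime. I would be candid that global optimality holds only up to EM's convergence guarantee.
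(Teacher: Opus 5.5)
Your part (1) is exactly the paper's argument: the M-step fixed-point equation summed over $k$, combined with $\sum_k\hat\gamma(y_{jk})=1$. For part (2), your central identity is also precisely what the paper proves. That identity says that once $\bLambda$ is the $\boldsymbol\Gamma$-weighted mean, the weighted distortion $\sum_{j,k}\gamma(y_{jk})\|\z_{\cdot,j}-\blambda_{\cdot,k}\|^2$ equals $\sum_{i,j}z_{ij}(z_{ij}-\hat z_{ij})$. The paper does the same computation row by row, dividing by $\hat\lambda_{ik}$; your form $n_k\|\blambda_{\cdot,k}\|^2=\langle\blambda_{\cdot,k},\sum_j\gamma(y_{jk})\z_{\cdot,j}\rangle$ avoids that division. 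The paper then simply reads this value identity as the argmin claim. Your observation that the literal argmin over all factorizations $\bLambda\boldsymbol\Gamma^\top$ is unbounded below is correct, since the objective is affine in $\Z'$. The paper's proof does not address this.

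The gap is in your repair. On the profile class the objective becomes $F(\boldsymbol\Gamma)=\min_{\bLambda}\sum_{j,k}\gamma_{jk}\|\z_{\cdot,j}-\blambda_{\cdot,k}\|^2$, but the soft EM actually run does not minimize $F$.

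\textbf{Where the monotonicity claims fail.} For $\sigma>0$ the E step minimizes the free energy $\frac{1}{2\sigma}\sum_{j,k}\gamma_{jk}\|\z_{\cdot,j}-\blambda_{\cdot,k}\|^2+\sum_{j,k}\gamma_{jk}\log\gamma_{jk}$, not the distortion. Remark~\ref{th1} gives monotonicity of the log-likelihood, not of the distortion. The distortion can increase across an E step when the entropy of $\boldsymbol\Gamma$ rises. So neither ``the ELBO argument'' nor ``EM monotonically decreases the distortion'' holds.

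\textbf{Why soft $\hat{\boldsymbol\Gamma}$ cannot be the argmin.} $F$ is a pointwise minimum of functions affine in $\boldsymbol\Gamma$, hence concave. Its minimum over row-stochastic $\boldsymbol\Gamma$ is therefore attained at a hard assignment. By the envelope theorem, $\partial F/\partial\gamma_{jk}=\|\z_{\cdot,j}-\hat{\blambda}_{\cdot,k}\|^2$. A fixed point with strictly positive Gaussian posteriors satisfies the KKT conditions only if all $K$ distances from $\z_{\cdot,j}$ coincide. So your fallback, ``$\hat\Z$ is a stationary point,'' fails too.

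\textbf{What the argument does support.} First, the value identity at convergence, which is all the paper's proof establishes. Second, in the $\sigma\to0$ limit where EM reduces to Lloyd's algorithm, $(\hat{\bLambda},\hat{\boldsymbol\Gamma})$ is a blockwise, hence only local, optimum of the distortion, not a global argmin. Part (2) should be stated in that weakened form.
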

\vspace{-2mm}

Theorem \ref{th2} (1) guarantees the invariance of each individual's total signal mass. This is crucial for fairness and interpretability, as it prevents the model from systematically biasing the reconstructed profiles of any individual; while Theorem \ref{th2} (2) ensures that our reconstruction prioritizes the retention of the significant variations (with large $z_{ij}$) due to the cross-term $\sum_{i,j}z_{ij} {z}'_{ij}$. This enables our low-rank reconstruction to align strongly with these dominant patterns. 
By leveraging the optimal low-dimensional subspace to reconstruct both the local- and global-view node embeddings, the semantic gap between them is also well alleviated. The obtained compact representation from both views can express the underlying data structure better, which is promising and beneficial to enhance the graph clustering. 
However, the above implementation is performed outside the gradient flow, we hence inject the tuned representations back into the gradient path using a residual connection:
\begin{equation}\label{com_rep}
\tilde{\Z} = (\tilde{\Z}^{\text{l}\top}, \tilde{\Z}^{\text{g}\top})^{\top} = \hat{\Z} + \Z.
\end{equation}
On the one hand, it ensures the model remains trainable by allowing gradients to flow through the residual path; on the other hand, it combines the global trends captured by the low-rank component with the local details preserved in the original $\Z$, mitigating the over-smoothing that may result from relying solely on a low-rank constraint and preventing model collapse~\citep{he2016deep}.

\subsection{Consistency Learning for Semantic Enhancing}

On the basis of compact node representations $\tilde{\Z}^{\text{l}}=(\tilde{\z}_1^{\text{l}},\ldots, \tilde{\z}_N^{\text{l}})^{\top}$ and $\tilde{\Z}^{\text{g}}=(\tilde{\z}_1^{\text{g}},\ldots, \tilde{\z}_N^{\text{g}})^{\top}$ in \Eqref{com_rep}, we develop the consistency learning to facilitate the exchange of knowledge between the two complementary perspectives. 
Meanwhile, we expect that the final representations can finely reflect the inherent relationships among nodes, thereby enhancing the label-free graph clustering. Toward this end, we share the compact semantics by comparing the similarities of each node to other samples in the embedding spaces of the two views. 

We first randomly select a set of nodes over the given graph with indices $\{a_1,\ldots,a_M\}$ as the anchor samples. Then, we calculate the cosine similarities between each node and these anchor samples and formulate the similarity distribution by the softmax operation. Mathematically, for the $i$-th node representations from the local and global views, the similarity scores of the $m$-th anchor are: 
\begin{align*}
    p_m^i & =  \frac{\exp(\cos(\tilde{\z}_i^{\text{l}}, \tilde{\z}_{a_m}^{\text{l}})/ \tau)}{\sum_{m'=1}^M \exp(\cos(\tilde{\z}_i^{\text{l}}, \tilde{\z}_{a_m'}^{\text{l}})/ \tau)}, ~
    q_m^i  =  \frac{\exp(\cos(\tilde{\z}_i^{\text{g}}, \tilde{\z}_{a_m}^{\text{g}})/ \tau)}{\sum_{m'=1}^M \exp(\cos(\tilde{\z}_i^{\text{g}}, \tilde{\z}_{a_m'}^{\text{g}})/ \tau)},
\end{align*}
where $\cos(\a,\b)=\a^{\top}\b/(||\a||\cdot ||\b||)$ is the cosine similarity, $\tau$ denotes the temperature parameter. For a comprehensive similarity measure, we need a large number of anchor samples so that they have large variations to cover the neighborhood of any node. However, it requires high computational costs to process too many samples in a single iteration. To address this problem, we maintain a memory bank with size $M$ as a queue defined on the fly by random nodes and calculate the similarity scores for each node and the samples in the queue. 
By dynamically updating the queue, we improve the diversity of the anchors with low complexity. 

With the local- and global-view similarity distributions $\p^i=(p^i_{1}, \ldots, p^i_{M})$ and $\q^i=(q^i_{1},\ldots, q^i_{M})$, we encourage the consistency of them to facilitate the knowledge transfer and mutually enhance the representation semantics. Formally, we define the consistency learning loss as: 
\begin{equation}\label{loss}
\cL = \frac{1}{2N}\sum\nolimits_{i=1}^N \left(\text{KL}(\p^i || \q^ i) + \text{KL}(\q^i || \p^ i) \right), 
\end{equation}
where $\text{KL} (\cdot || \cdot)$ is the Kullback-Leibler (KL) divergence. In the training, we minimize $\cL$ to optimize our proposed \method{} and enhance self-supervised learning. After converging, we fuse the local- and global-view representations by: 
\begin{equation}\label{fuse_rep}
\Z^{\text{F}} =  (\tilde{\Z}^{\text{l}} + \tilde{\Z}^{\text{g}})/2. 
\end{equation}
Then, we perform $K$-means on the fused node representation $\Z^{\text{F}}$ to obtain the clustering results. An outline of the training procedure is provided in Appendix~\ref{app::algorithm}. A detailed analysis of time and space complexities can be found in Section \ref{com} and Appendix \ref{app::com}.

\section{Experiment}\label{sec::experiment}

\subsection{Experimental Setup}\label{es}

We evaluate our \method{} with five widely used benchmark datasets for deep graph clustering, i.e., Cora~\citep{sen2008collective}, AMAP~\citep{shchur2018pitfalls}, BAT~\citep{liu2023simple}, EAT~\citep{liu2023simple}, and UAT~\citep{liu2023simple}. To comprehensively assess the effectiveness of our \method{}, we benchmark it against leading state-of-the-art methods, including \emph{antoencoder-based methods}, i.e., DEC~\citep{xie2016unsupervised}, IDEC~\citep{guo2017improved}, DAEGC~\citep{wang2019attributed}, ARGA~\citep{pan2019learning}, SDCN~\citep{bo2020structural}, DFCN~\citep{tu2021deep}, and 
\emph{contrastive learning-based methods}, i.e., AGE~\citep{cui2020adaptive}, MVGRL~\citep{hassani2020contrastive}, GDCL~\citep{zhao2021graph}, AutoSSL~\citep{jin2022automated}, AGC-DRR~\citep{gong2022attributed}, AFGRL~\citep{lee2022augmentation}, GDCL~\citep{zhao2021graph}, ProGCL~\citep{xia2022progcl}, RGC~\citep{liu2023reinforcement},  Dink-Net~\citep{liu2023dink}, CCGC~\citep{yang2023cluster}, GraphLearner~\citep{yang2024graphlearner}, and MAGI \citep{liu2024revisiting}. Details on evaluation metrics and implementation are presented in Appendix \ref{app::setup}.

\begin{table*}[!t]
\caption{Clustering performance on five benchmark datasets (mean ± standard deviation). The top two results for each method are marked in \textbf{bold} and \underline{underline}, respectively.}
\label{tab:result1}
\centering
\setlength{\tabcolsep}{3.5pt}
\resizebox{1\textwidth}{!}{
\begin{tabular}{c|c|ccccccccc|c}
\toprule
\multicolumn{1}{>{\columncolor{gray!25}}c|}{\textbf{Dataset}} & \multicolumn{1}{>{\columncolor{gray!25}}c|}{\textbf{Metric}}  & \multicolumn{1}{>{\columncolor{gray!25}}c}{\textbf{SDCN}}   & \multicolumn{1}{>{\columncolor{gray!25}}c}{\textbf{DFCN}}    & \multicolumn{1}{>{\columncolor{gray!25}}c}{\textbf{AutoSSL}}   & \multicolumn{1}{>{\columncolor{gray!25}}c}{\textbf{AFGRL}}   & \multicolumn{1}{>{\columncolor{gray!25}}c}{\textbf{GDCL}}     & \multicolumn{1}{>{\columncolor{gray!25}}c}{\textbf{ProGCL}}  & \multicolumn{1}{>{\columncolor{gray!25}}c}{\textbf{CCGC}} & \multicolumn{1}{>{\columncolor{gray!25}}c}{\textbf{GraphLearner}} & \multicolumn{1}{>{\columncolor{gray!25}}c|}{\textbf{MAGI}}  & \multicolumn{1}{>{\columncolor{gray!25}}c}{\textbf{\method{} (Ours)}} \\
\midrule
\multirow{4}{*}{\textbf{Cora}} 
& {ACC}                                                                            
        & {35.60±2.83}      & {36.33±0.49}                & {63.81±0.57}        & {26.25±1.24}       & {70.83±0.47}        & {57.13±1.23}       & {73.88±1.20} & {74.91±1.78} & \multicolumn{1}{>{\columncolor{blue!15}}c|}{\underline{76.21±0.50}} & \multicolumn{1}{>{\columncolor{red!15}}c}{\textbf{79.36+0.69}}  \\
& {NMI}                                                                            
        & {14.28±1.91}      & {19.36±0.87}                & {47.62±0.45}        & {12.36±1.54}       & {56.30±0.36}        & {41.02±1.34}       & {56.45±1.04}  & {58.16±0.83} & \multicolumn{1}{>{\columncolor{blue!15}}c|}{\underline{59.84±0.43}} & \multicolumn{1}{>{\columncolor{red!15}}c}{\textbf{60.71±0.59}} \\
& {ARI}                                                                            
        & {07.78±3.24}      & {04.67±2.10}                & {38.92±0.77}        & {14.32±1.87}       & {48.05±0.72}        & {30.71±2.70}       & {52.51±1.89}  & {53.82±2.25} & \multicolumn{1}{>{\columncolor{blue!15}}c|}{\underline{57.63±0.81}} & \multicolumn{1}{>{\columncolor{red!15}}c}{\textbf{58.76±1.47}} \\
& {F1}                                                                             
        & {24.37±1.04}      & {26.16±0.50}                & {56.42±0.21}        & {30.20±1.15}       & {52.88±0.97}        & {45.68±1.29}       & {70.98±2.79}  & {73.33±1.86} & \multicolumn{1}{>{\columncolor{blue!15}}c|}{\underline{74.07±0.45}} & \multicolumn{1}{>{\columncolor{red!15}}c}{\textbf{77.95±0.72}} \\
\midrule
\multirow{4}{*}{\textbf{AMAP}} 
& {ACC}                                                                            
        & {53.44±0.81}      & {76.82±0.23}                & {54.55±0.97}        & {75.51±0.77}       & {43.75±0.78}        & {51.53±0.38}       & \multicolumn{1}{>{\columncolor{blue!15}}c}{\underline{77.25±0.41}}  & {77.24±0.87} & 75.42±3.22 & \multicolumn{1}{>{\columncolor{red!15}}c}{\textbf{79.27±0.70}} \\
& {NMI}                                                                            
        & {44.85±0.83}      & {66.23±1.21}                & {48.56±0.71}        & {64.05±0.15}       & {37.32±0.28}        & {39.56±0.39}       & \multicolumn{1}{>{\columncolor{blue!15}}c}{\underline{67.44±0.48}}  & {67.12±0.92} & 64.98±1.92 & \multicolumn{1}{>{\columncolor{red!15}}c}{\textbf{68.85±1.55}} \\
& {ARI}                                                                            
        & {31.21±1.23}      & {58.28±0.74}                & {26.87±0.34}        & {54.45±0.48}       & {21.57±0.51}        & {34.18±0.89}       & {57.99±0.66}  & \multicolumn{1}{>{\columncolor{blue!15}}c}{\underline{58.14±0.82}} & 55.68±2.88 & \multicolumn{1}{>{\columncolor{red!15}}c}{\textbf{60.94±1.51}} \\
& {F1}                                                                             
        & {50.66±1.49}      & {71.25±0.31}                & {54.47±0.83}        & {69.99±0.34}       & {38.37±0.29}        & {31.97±0.44}       & {72.18±0.57}  & {73.02±2.34} & \multicolumn{1}{>{\columncolor{red!15}}c|}{\textbf{73.03±3.30}} & \multicolumn{1}{>{\columncolor{blue!15}}c}{\underline{72.36±1.15}} \\ 
\midrule
\multirow{4}{*}{\textbf{BAT}}							
& {ACC}                                                                            
        & {53.05±4.63}      & {55.73±0.06}                & {42.43±0.47}        & {50.92±0.44}       & {45.42±0.54}        & {55.73±0.79}       & {75.04±1.78}  & \multicolumn{1}{>{\columncolor{blue!15}}c}{\underline{75.50±0.87}}  & 59.54±3.90 & \multicolumn{1}{>{\columncolor{red!15}}c}{\textbf{78.85±0.91}} \\
& {NMI}                                                                            
        & {25.74±5.71}      & {48.77±0.51}                & {17.84±0.98}        & {27.55±0.62}       & {31.70±0.42}        & {28.69±0.92}       & {50.23±2.43}  & \multicolumn{1}{>{\columncolor{blue!15}}c}{\underline{50.58±0.90}} &29.83±5.13 & \multicolumn{1}{>{\columncolor{red!15}}c}{\textbf{55.00±0.87}} \\
& {ARI}                                                                            
        & {21.04±4.97}      & {37.76±0.23}               & {13.11±0.81}        & {21.89±0.74}       & {19.33±0.57}        & {21.84±1.34}       & {46.95±3.09}  & \multicolumn{1}{>{\columncolor{blue!15}}c}{\underline{47.45±1.53}} &23.91±3.76 & \multicolumn{1}{>{\columncolor{red!15}}c}{\textbf{53.52±1.15}} \\
& {F1}                                                                             
        & {46.45±5.90}      & {50.90±0.12}                & {34.84±0.15}        & {46.53±0.57}       & {39.94±0.57}        & {56.08±0.89}       & {74.90±1.80}  & \multicolumn{1}{>{\columncolor{blue!15}}c}{\underline{75.40±0.88}} &59.12±6.11 & \multicolumn{1}{>{\columncolor{red!15}}c}{\textbf{78.56±1.01}} \\ 
\midrule
\multirow{4}{*}{\textbf{EAT}}									
& {ACC}                                                                            
        & {39.07±1.51}      & {49.37±0.19}                & {31.33±0.52}        & {37.42±1.24}       & {33.46±0.18}        & {43.36±0.87}       & {57.19±0.66}  & \multicolumn{1}{>{\columncolor{blue!15}}c}{\underline{57.22±0.73}} & 49.10±1.50 & \multicolumn{1}{>{\columncolor{red!15}}c}{\textbf{58.87±0.49}} \\
& {NMI}                                                                           
       & {08.83±2.54}      & {32.90±0.41}                & {07.63±0.85}        & {11.44±1.41}       & {13.22±0.33}        & {23.93±0.45}       & \multicolumn{1}{>{\columncolor{blue!15}}c}{\underline{33.85±0.87}}  & {33.47±0.34} & 27.00±2.65 & \multicolumn{1}{>{\columncolor{red!15}}c}{\textbf{34.10±1.26}} \\
& {ARI}                                                                            
        & {06.31±1.95}      & {23.25±0.18}                & {02.13±0.67}        & {06.57±1.73}       & {04.31±0.29}        & {15.03±0.98}       & \multicolumn{1}{>{\columncolor{blue!15}}c}{\underline{27.71±0.41}}  & {26.21±0.81} & 21.52±1.01 & \multicolumn{1}{>{\columncolor{red!15}}c}{\textbf{27.91±1.52}} \\
& {F1}                                                                             
        & {33.42±3.10}      & {42.95±0.04}                & {21.82±0.98}        & {30.53±1.47}       & {25.02±0.21}        & {42.54±0.45}       & {57.09±0.94}  & \multicolumn{1}{>{\columncolor{blue!15}}c}{\underline{57.53±0.67}} & 44.38±2.20 & \multicolumn{1}{>{\columncolor{red!15}}c}{\textbf{58.06±2.64}} \\
\midrule
\multirow{4}{*}{\textbf{UAT}}							
& {ACC}                                                                            
       & {52.25±1.91}      & {33.61±0.09}                & {42.52±0.64}        & {41.50±0..25}      & {48.70±0.06}        & {45.38±0.58}       & \multicolumn{1}{>{\columncolor{blue!15}}c}{\underline{56.34±1.11}}  & {55.31±2.42} & 50.35±0.16 & \multicolumn{1}{>{\columncolor{red!15}}c}{\textbf{59.68±0.36}} \\
& {NMI}                                                                            
        & {21.61±1.26}      & {26.49±0.41}               & {17.86±0.22}        & {17.33±0.54}       & {25.10±0.01}        & {22.04±2.23}       & \multicolumn{1}{>{\columncolor{blue!15}}c}{\underline{28.15±1.92}}  & {24.40±1.69} & 21.45±0.28 & \multicolumn{1}{>{\columncolor{red!15}}c}{\textbf{30.12±0.51}} \\
& {ARI}                                                                            
       & {21.63±1.49}      & {11.87±0.23}               & {13.13±0.71}        & {13.62±0.57}       & {21.76±0.01}        & {14.74±1.99}       & \multicolumn{1}{>{\columncolor{blue!15}}c}{\underline{25.52±2.09}}  & {22.14±1.67} & 17.79±0.23 & \multicolumn{1}{>{\columncolor{red!15}}c}{\textbf{29.46±0.47}} \\
& {F1}                                                                             
        & {45.59±3.54}      & {25.79±0.29}               & {34.94±0.87}        & {36.52±0.89}       & {45.69±0.08}        & {39.30±1.82}       & \multicolumn{1}{>{\columncolor{blue!15}}c}{\underline{55.24±1.69}}  & {52.77±2.61} & 47.52±0.13 & \multicolumn{1}{>{\columncolor{red!15}}c}{\textbf{58.03±0.34}} \\ 
\bottomrule
\end{tabular}
}
\end{table*}

\subsection{Experimental Results}

In Table~\ref{tab:result1} and Appendix \ref{app::main}, we present the quantitative results of our proposed \method{}, compared with various competitive deep graph clustering baselines. From the tables, we draw the following key observations. \emph{On the one hand}, compared to autoencoder-based methods, contrastive learning-based approaches show better performance. The reason lies in the ability of contrastive learning to more effectively exploit the intrinsic semantic information of the graph-structured data. By learning discriminative representations in a principled manner, contrastive learning better serves the clustering task. \emph{On the other hand}, our approach achieves almost the best results on all five datasets, and significantly outperforms the runner-ups on many datasets. For instance, on Cora and BAT in Table~\ref{tab:result1}, our proposed \method{} surpass the runner-ups by \{4.13$\%$, 1.45$\%$, 2.00$\%$, 5.23$\%$\} and \{4.16$\%$, 8.74$\%$, 12.79$\%$, 4.19$\%$\} under four evaluation metrics, providing substantial evidence for the superiority of our approach. 
These results substantiate the success of the compactness learning and consistency learning embodied in \method{} for graph clustering, while also implicitly suggesting the superiority of cross-view consistency learning over contrastive learning, which is further validated in detail in Section \ref{sec::ablation} and Appendix \ref{app::addcon}.

\subsection{Ablation Study}\label{sec::ablation}
In this section, we analyze the impact of various components of our proposed method. 

\textbf{Comparison of different model variants.} We first define different model variants as: 
(i) $M_1$: solely adopt the local Laplacian smoothing filter to extract node representations (\emph{i.e.}, $\tilde{\X}^{\text{l}}$) for clustering; 
(ii) $M_2$: solely adopt the global one (\emph{i.e.}, $\tilde{\X}^{\text{g}}$) for clustering; 
(iii) $M_3$: adopt local low-rank embeddings (\emph{i.e.}, $\tilde{\Z}^{\text{l}}$) by compactness learning based on $\tilde{\X}^{\text{l}}$ for clustering; 
(iv) $M_4$: adopt global low-rank embeddings (\emph{i.e.}, $\tilde{\Z}^{\text{g}}$) by compactness learning based on $\tilde{\X}^{\text{g}}$ for clustering; 
(v) $M_5$: remove compactness learning from our full model \method{}. 
The results are summarized in Figure~\ref{tab:ablation}. 
\begin{wrapfigure}{r}{0.53\textwidth}
\vspace{-2mm}
\centering
\scriptsize  
\begin{minipage}{0.48\linewidth}
\centerline{\includegraphics[width=1\textwidth]{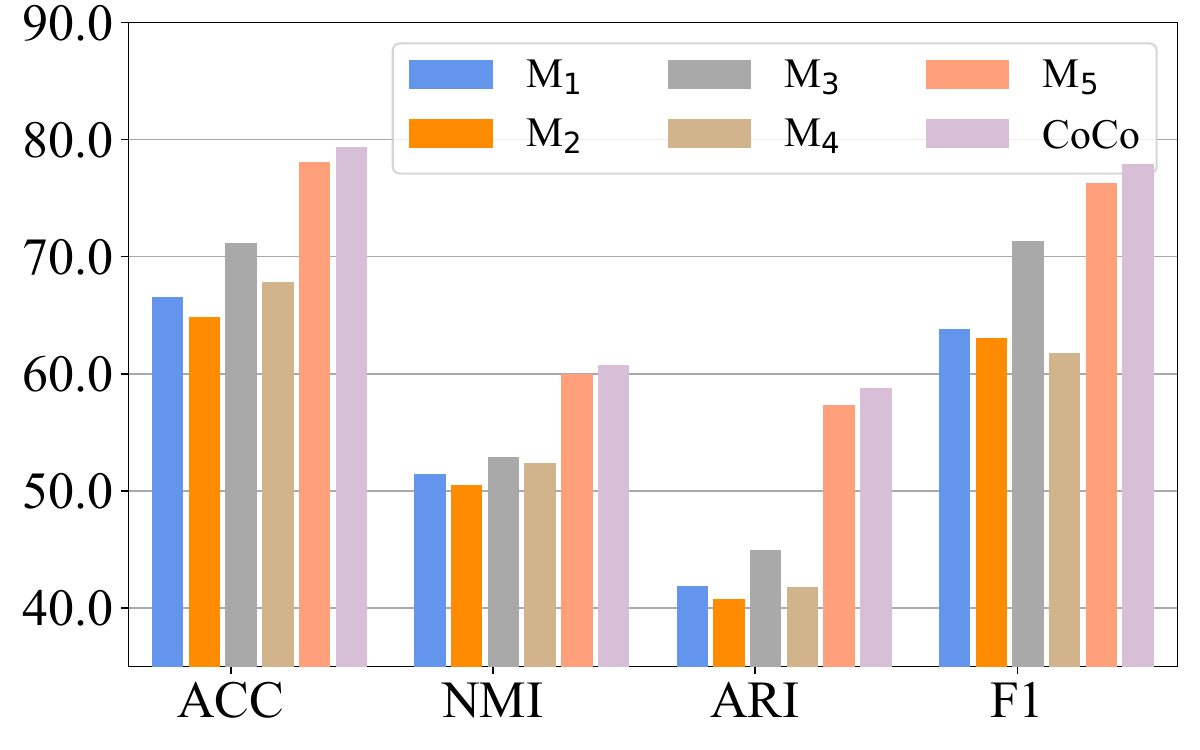}}
\centerline{(a) Cora}
\end{minipage}
\hspace{1mm}
\begin{minipage}{0.48\linewidth}
\centerline{\includegraphics[width=1\textwidth]{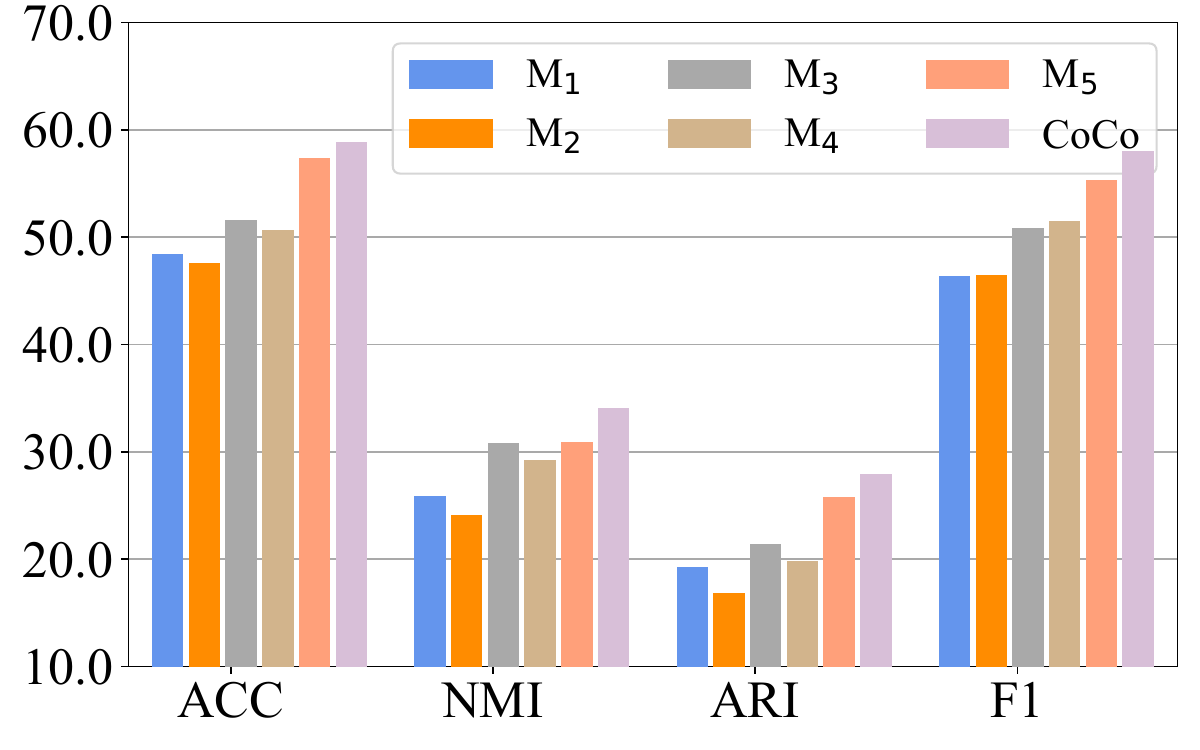}}
\centerline{(c) EAT}
\end{minipage}
\\
\begin{minipage}{0.48\linewidth}
\centerline{\includegraphics[width=1\textwidth]{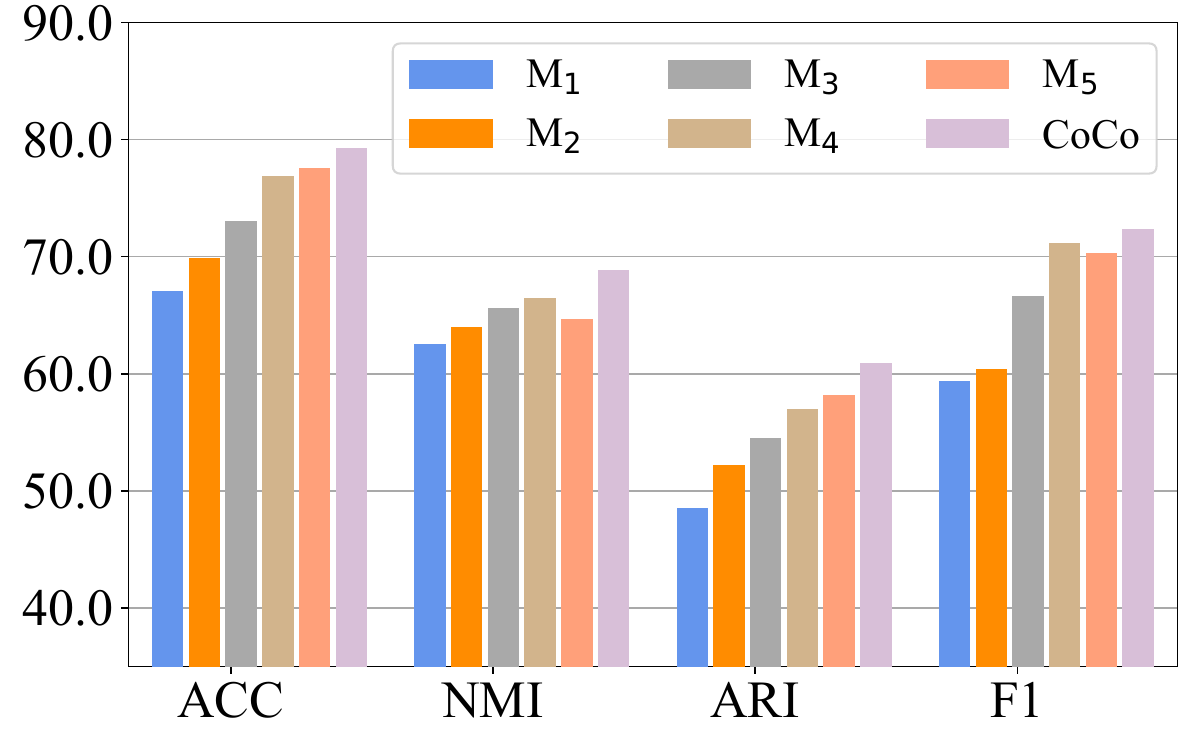}}
\centerline{(b) AMAP}
\end{minipage}
\hspace{1mm}
\begin{minipage}{0.48\linewidth}
\centerline{\includegraphics[width=1\textwidth]{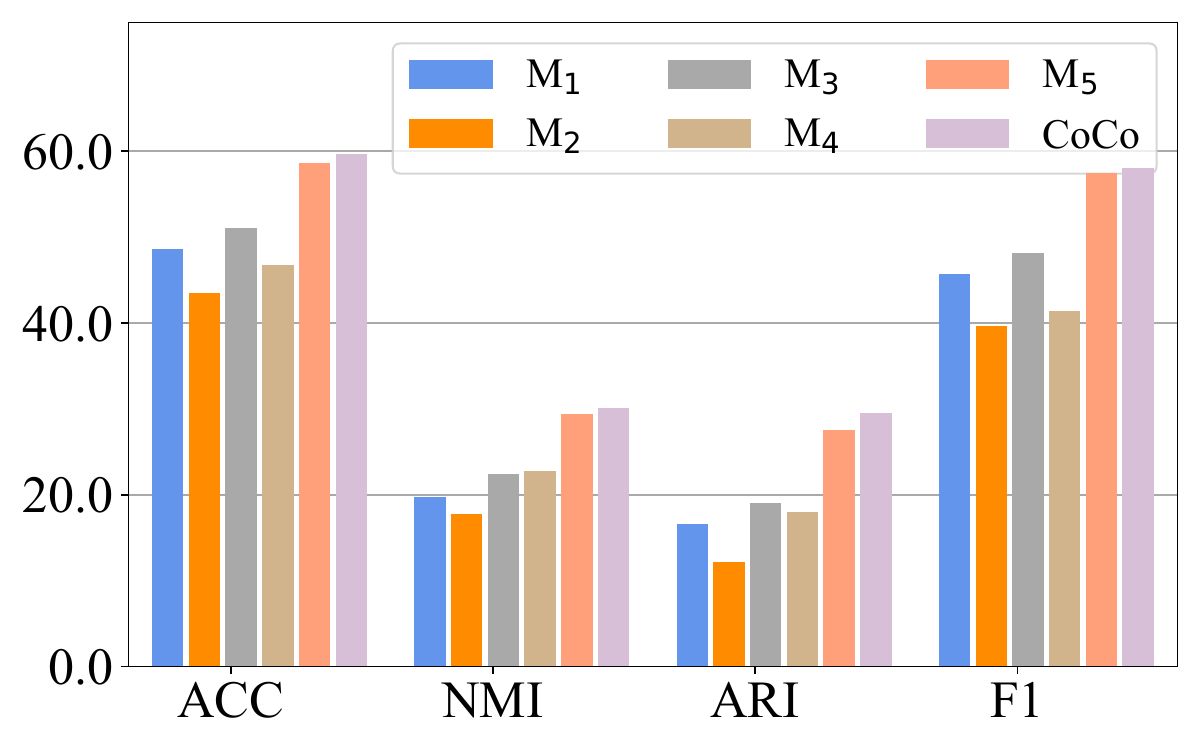}}
\centerline{(d) UAT}
\end{minipage}
\caption{The ablation experimental results.}
\label{tab:ablation}
\vspace{-4mm}
\end{wrapfigure}
Comparing $M_3$ with $M_1$ and $M_4$ with $M_2$, mapping raw node features to low-rank embeddings improves the performance in both cases, indicating the effectiveness of our low-rank representations in learning better cluster assignments. Similarly, when comparing $M_5$ with our \method{}, removing the low-rank mapping also leads to performance degradation, further emphasizing the necessity of compactness learning. In addition, comparing $M_5$ and \method{} with $M_1$-$M_4$, we observe a significant performance difference between the two groups, possibly because our consistency learning effectively integrates semantic knowledge from both local and global perspectives, enabling  more discriminative and robust representations compared to any single viewpoint. Below, we further consider the comparison between the consistency loss and other surrogate losses.

\begin{wraptable}{r}{0.55\textwidth}
\vspace{-4mm}
\caption{The comparative results of consistency learning loss $v.s$ MSE and InfoNCE. 
}
\label{tab:consistency}
\centering
\resizebox{0.55\textwidth}{!}{
\begin{tabular}{c|l|cccc}
\toprule
\rowcolor{gray!25}
{\bf Dataset} & {\bf Loss} &   {\bf ACC} &  {\bf NMI} &  {\bf ARI} &  {\bf F1} \\ 
\midrule
\multirow{3}{*}{\textbf{Cora}} 
            & MSE & 77.84±0.67 & 60.31±0.89 & 57.81±1.42 & 73.89±1.09 \\
            & InfoNCE & 75.57±1.16 & 58.03±1.44 & 54.69±1.85 & 72.58±1.81 \\
            & \multicolumn{1}{>{\columncolor{yellow!25}}c|}{Consistency} & \multicolumn{1}{>{\columncolor{red!15}}c}{\textbf{79.36+0.69}} & \multicolumn{1}{>{\columncolor{red!15}}c}{\textbf{60.71±0.59}} & \multicolumn{1}{>{\columncolor{red!15}}c}{\textbf{58.76±1.47}} & \multicolumn{1}{>{\columncolor{red!15}}c}{\textbf{77.95±0.72}} \\
\midrule
\multirow{3}{*}{\textbf{AMAP}} 
            & MSE & 77.62±0.44 & 67.68±0.76 & 58.51±0.97 & 71.84±0.77  \\
            & InfoNCE & 77.25±0.33 & 67.12±0.46 & 58.24±0.57 & 71.89±0.53 \\
            
            & \multicolumn{1}{>{\columncolor{yellow!25}}c|}{Consistency} & \multicolumn{1}{>{\columncolor{red!15}}c}{\textbf{79.27±0.70}} & \multicolumn{1}{>{\columncolor{red!15}}c}{\textbf{68.85±1.55}} & \multicolumn{1}{>{\columncolor{red!15}}c}{\textbf{60.94±1.51}} & \multicolumn{1}{>{\columncolor{red!15}}c}{\textbf{72.36±1.15}} \\
\midrule
\multirow{3}{*}{\textbf{UAT}} 
            & MSE & 57.18±0.74 & 28.43±0.62 & 25.65±1.11 & 56.96±0.73 \\
            & InfoNCE & 56.72±0.23 & 27.67±0.42 & 25.01±0.45 & 56.39±0.39 \\
            & \multicolumn{1}{>{\columncolor{yellow!25}}c|}{Consistency} & \multicolumn{1}{>{\columncolor{red!15}}c}{\textbf{59.68±0.36}} & \multicolumn{1}{>{\columncolor{red!15}}c}{\textbf{30.12±0.51}} & \multicolumn{1}{>{\columncolor{red!15}}c}{\textbf{29.46±0.47}} & \multicolumn{1}{>{\columncolor{red!15}}c}{\textbf{58.03±0.34}} \\
\bottomrule
\end{tabular}
}
\vspace{-2mm}
\end{wraptable}
\textbf{Influence of Consistency Learning.}
To investigate the advantages of our proposed consistency learning, we compare two widely used loss functions, Mean Squared Error (MSE) and contrastive loss InfoNCE~\citep{chen2020simple}, on all datasets. The comparative results are presented in Table~\ref{tab:consistency} and Appendix \ref{app::addcon}. It can be observed that InfoNCE performs worse than MSE in most cases. This could be potentially attributed to InfoNCE's reliance on extensive negative instance sampling in contrastive learning, which may introduce more complexity and sensitivity to hyper-parameter tuning compared to the direct regression-based optimization of MSE. Moreover, the performance of consistency learning significantly outperforms the other two loss functions, which fully demonstrates the effectiveness of semantic enhancement brought by aligning the similarity distributions of local and global perspectives in consistency learning.

\subsection{Visualization Analysis} 
To visually verify the validity of our proposed \method{}, we plot the projected distributions ($t$-SNE) of the learned embeddings on Cora and AMAP~\citep{van2008visualizing}, compared with six baselines. The visualization results are displayed in Figure~\ref{fig:t_SNE1}. It can be observed that \method{} exhibits a lower degree of inter-cluster confusion on Cora and better separability among different clusters on AMAP, 
showcasing \method{}'s ability to learn more discriminative representations and effective cluster assignments compared with competitive methods.

\begin{figure*}[]
\centering
\scriptsize 
\begin{minipage}{0.13\linewidth}
\centerline{\includegraphics[width=\textwidth]{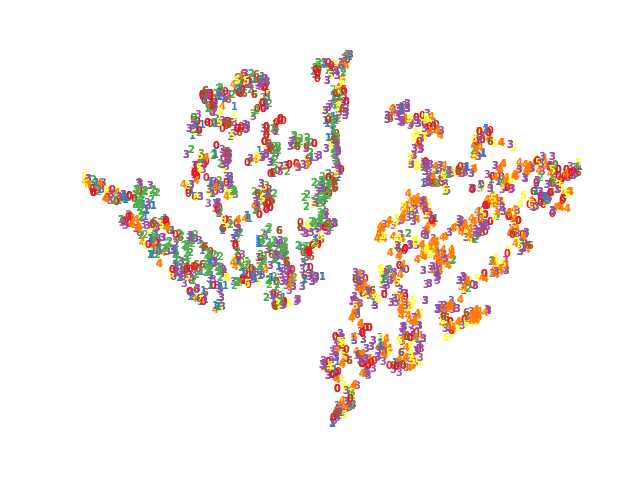}}
\centerline{\includegraphics[width=\textwidth]{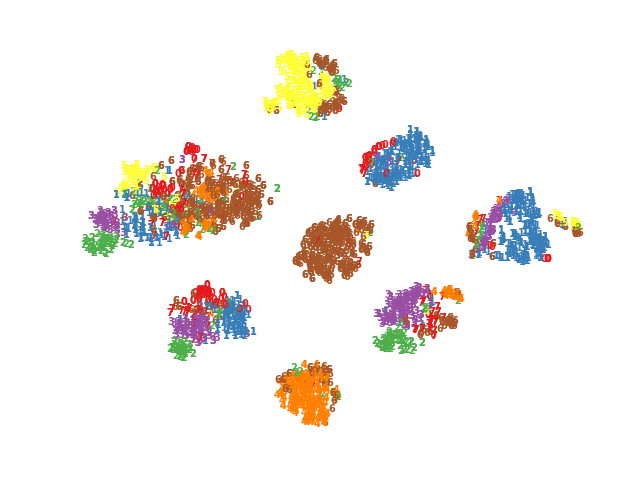}}
\centerline{SDCN}
\end{minipage}
\begin{minipage}{0.13\linewidth}
\centerline{\includegraphics[width=\textwidth]{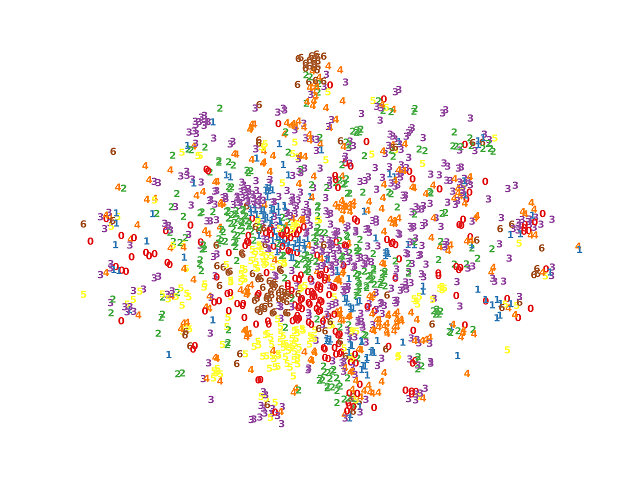}}
\centerline{\includegraphics[width=\textwidth]{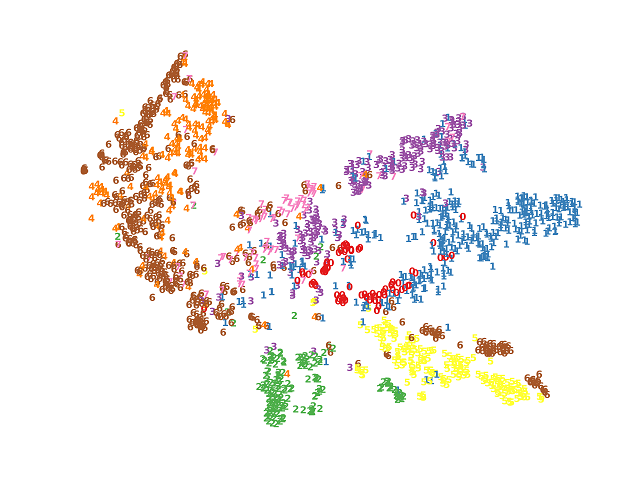}}
\centerline{AutoSSL}
\end{minipage}
\begin{minipage}{0.13\linewidth}
\centerline{\includegraphics[width=\textwidth]{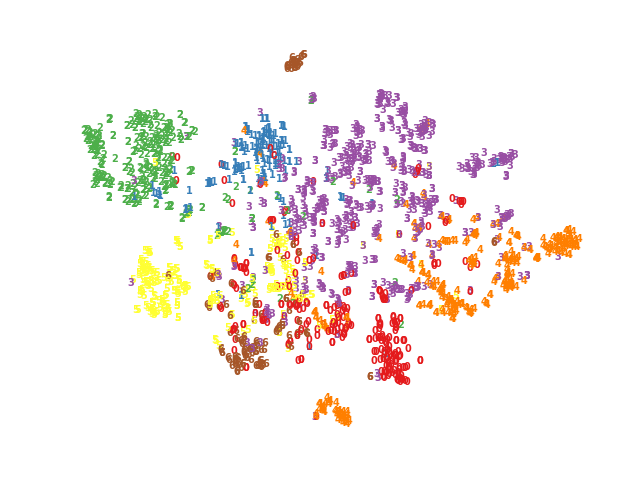}}
\centerline{\includegraphics[width=\textwidth]{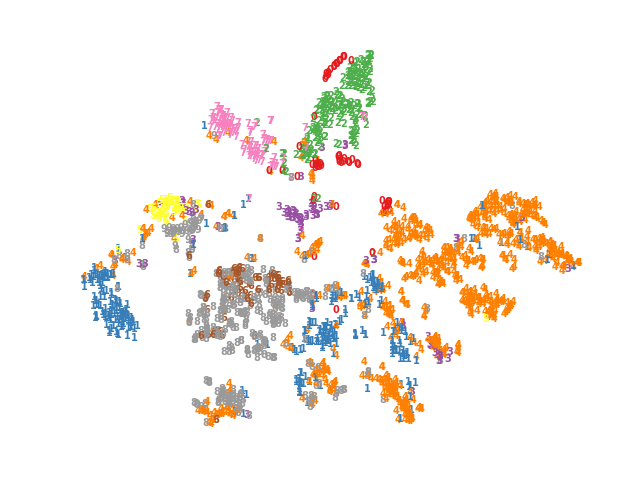}}
\centerline{ProGCL}
\end{minipage}
\begin{minipage}{0.13\linewidth}
\centerline{\includegraphics[width=\textwidth]{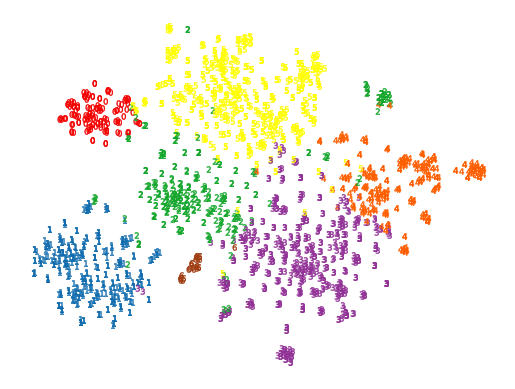}}
\centerline{\includegraphics[width=\textwidth]{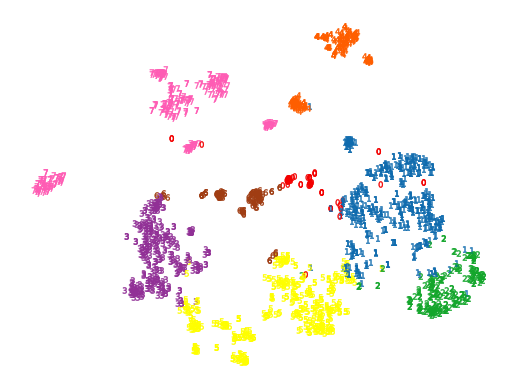}}
\centerline{CCGC}
\end{minipage}
\begin{minipage}{0.13\linewidth}
\centerline{\includegraphics[width=\textwidth]{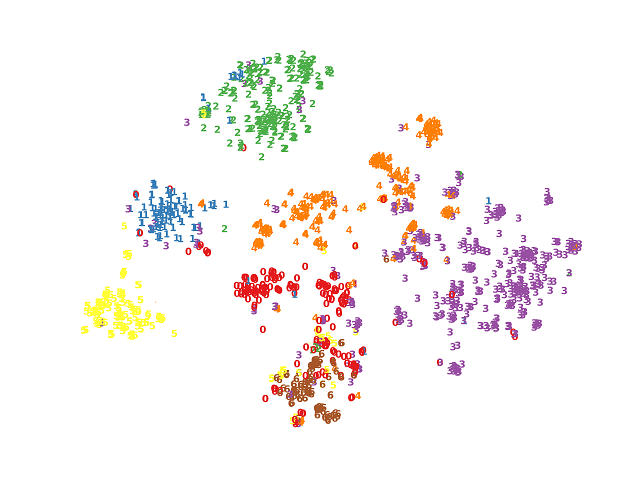}}
\centerline{\includegraphics[width=\textwidth]{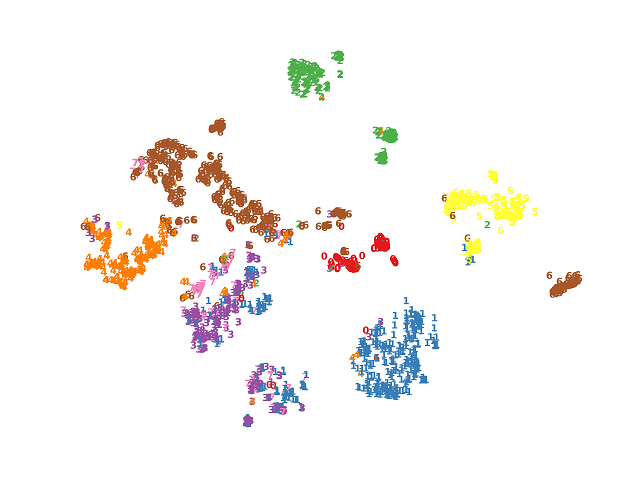}}
\centerline{GraphLearner}
\end{minipage}
\begin{minipage}{0.13\linewidth}
\centerline{\includegraphics[width=\textwidth]{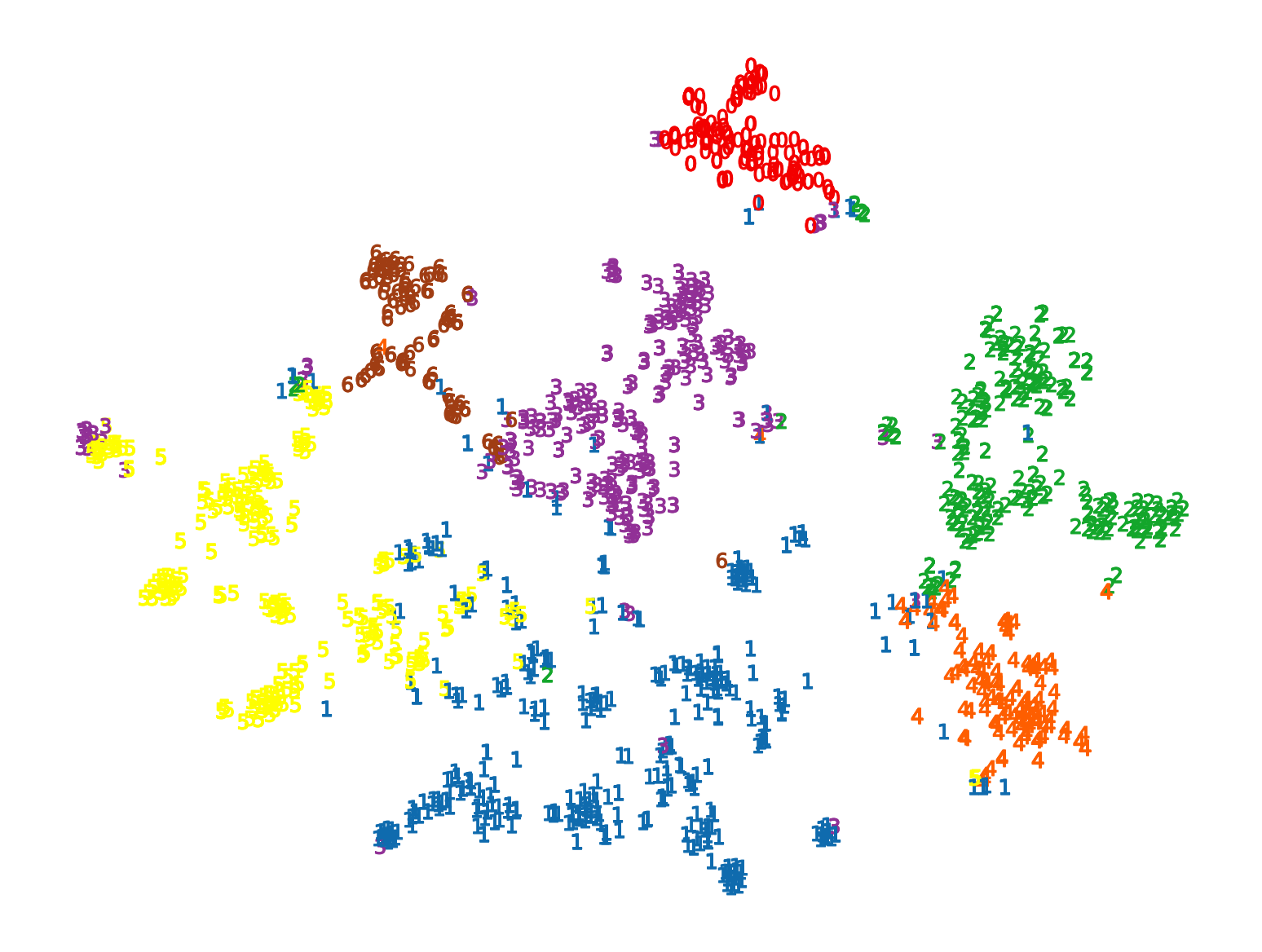}}
\centerline{\includegraphics[width=\textwidth]{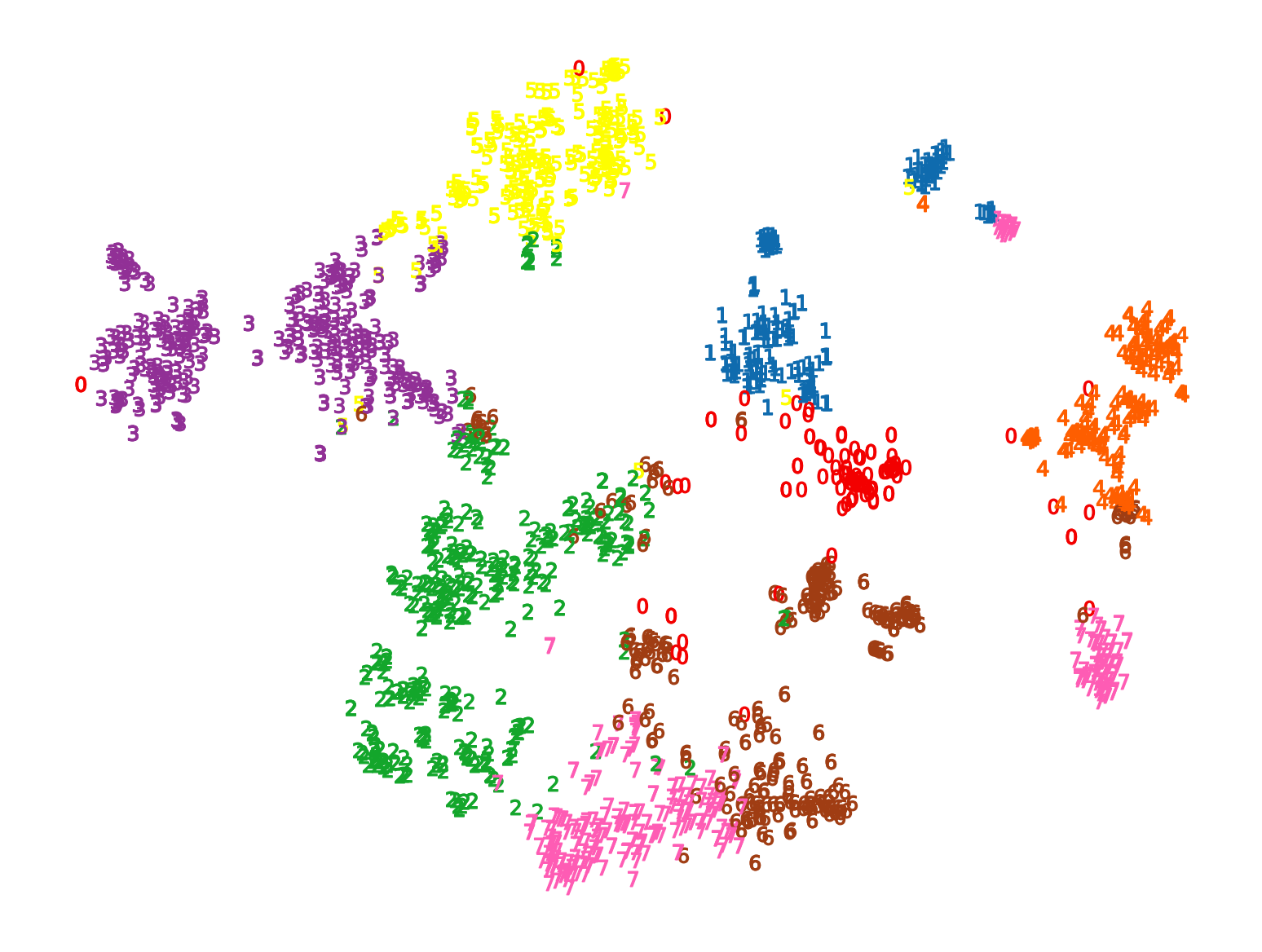}}
\centerline{MAGI}
\end{minipage}
\begin{minipage}{0.13\linewidth}
\centerline{\includegraphics[width=\textwidth]{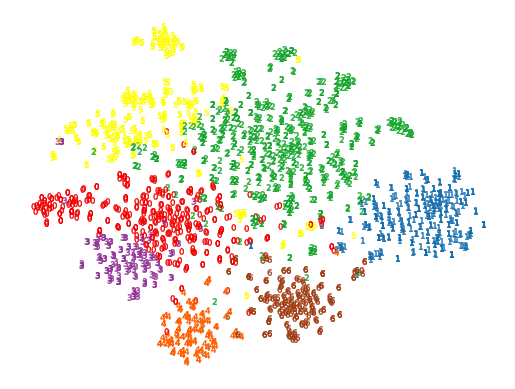}}
\centerline{\includegraphics[width=\textwidth]{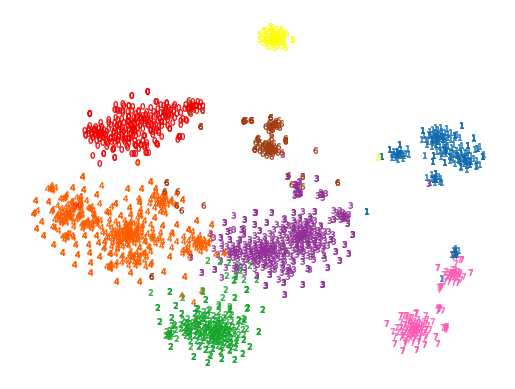}}
\centerline{\textbf{\method{} (Ours)}}
\end{minipage}
\caption{The $t$-SNE results comparing our \method{} with competitive baselines on two datasets. The first row and second row correspond to Cora and AMAP, respectively.}
\label{fig:t_SNE1}  
\end{figure*}

\subsection{Sensitivity Analysis}
Here, we study the sensitivity of key hyper-parameters: the subspace dimension $K$, the anchor number $M$, and the teleport probability $\alpha$. The results are presented in Figure \ref{Sensitivity_KM} and Appendix \ref{app::addsen}. We also perform robust analysis with varying $t$ (in \Eqref{filtered_fea}) in Appendix \ref{app::t}.

\noindent\textbf{Effect of $K$.} 
As shown in the first row of Figure \ref{Sensitivity_KM}, when $K$ ranges from 32 to 128, the clustering performance under four metrics on two datasets increases slowly. However, as $K$ further increases, the performance of the model starts to decline. This is because incorporating 
\begin{wrapfigure}{r}{0.45\textwidth}
\vspace{-3mm}
\centering
\scriptsize 
\begin{minipage}{0.47\linewidth}
\centerline{\includegraphics[width=1\textwidth]{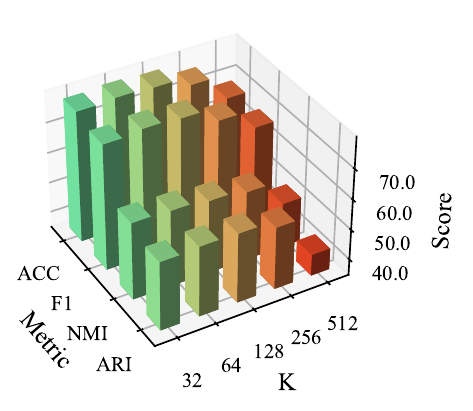}}
\vspace{-1mm}
\centerline{\includegraphics[width=\textwidth]{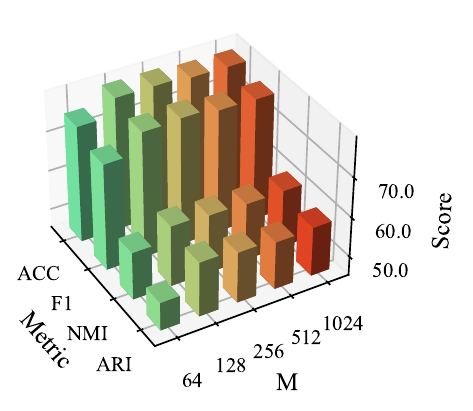}}
\vspace{-1mm}
\centerline{\includegraphics[width=\textwidth]{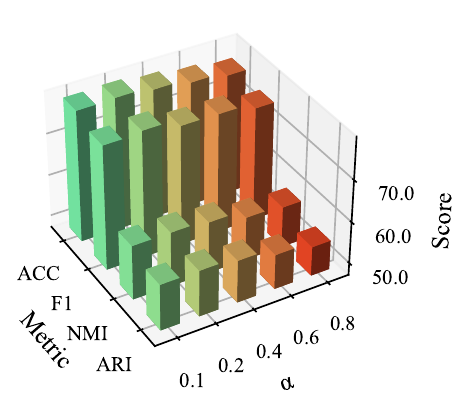}}
\centerline{(a) Cora}
\end{minipage}
\begin{minipage}{0.47\linewidth}
\centerline{\includegraphics[width=\textwidth]{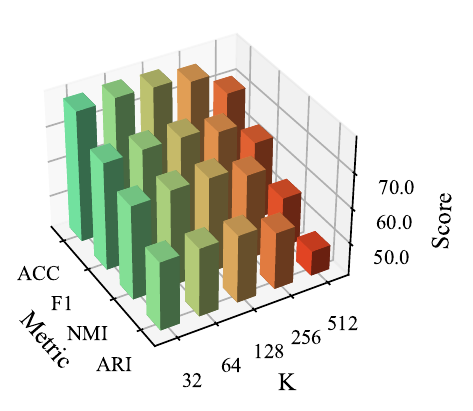}}
\vspace{-1mm}
\centerline{\includegraphics[width=1\textwidth]{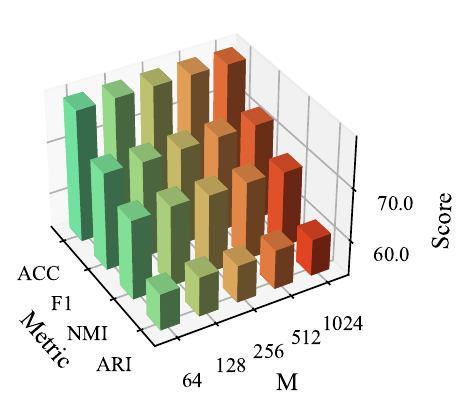}}
\vspace{-1mm}
\centerline{\includegraphics[width=1\textwidth]{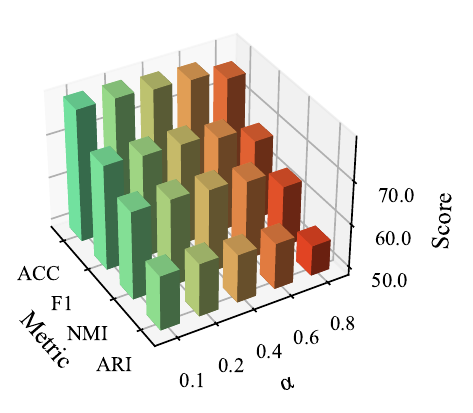}}
\centerline{(b) AMAP}
\end{minipage}
\caption{Sensitivity experimental results.}
\label{Sensitivity_KM}
\vspace{-6mm}
\end{wrapfigure}
too high-dimensional subspace introduces redundant or irrelevant underlying structural information, which hinders the compactness of node embeddings. 

\noindent\textbf{Effect of $M$.} 
The second row of Figure \ref{Sensitivity_KM} reports the impact of the number of anchor samples $M$ in the queue. It can be observed that when $M$ is very small ($M$=64), the model's performance is poorer. This is because for each node, there are not enough anchor samples in its vicinity to adequately represent the node's neighborhood structure. As $M$ increases, the model maintains relatively high performance and remains stable. 
This allows the model to better capture the neighborhood structure information of nodes and enhances consistency learning.

\noindent\textbf{Effect of $\alpha$.} 
In the third row of Figure \ref{Sensitivity_KM}, we report the impact of the teleport probability $\alpha$ in graph diffusion matrix $\S$. It can be observed that the model performance remains relatively stable when $\alpha$ is set to 0.1 or 0.2. However, as $\alpha$ continues to increase, the performance begins to decline, with a particularly noticeable drop at $\alpha$ = 0.8. This is because a larger $\alpha$ leads to a more “localized” diffusion scope (prioritizing the node’s own information), causing the two branches to capture increasingly similar information and thus failing to provide additional benefits. In contrast, a smaller $\alpha$ results in a more “global” diffusion (integrating information across the entire graph), which is more critical for capturing richer and complementary knowledge.

\subsection{Robustness Analysis on Noisy Graphs} 

To further assess the effectiveness and robustness of \method{}, we construct noisy graphs from BAT under three noise settings: (1) \emph{attribute noise} by adding Gaussian noise $\mathcal{N}(0,5)$; (2) \emph{edge noise} by  
\begin{wraptable}{r}{0.5\textwidth}
\caption{Comparison of accuracy on noisy BAT. 
}
\label{tab::noisy}
\centering
\setlength{\tabcolsep}{4pt}
\resizebox{0.5\textwidth}{!}{
{\small
\begin{tabular}{c|ccc}
\toprule
\rowcolor{gray!25}
{\bf Noise Type} & {\bf Attribute} &   {\bf Edge} &  {\bf Attribute \& Edge}  \\ 
\midrule
\textbf{GraphLearner} & 52.52 & 36.79 & 38.63 \\
\textbf{MAGI} & 36.26 & 38.70 & 35.19 \\
\textbf{\method{} (Ours)} & \multicolumn{1}{>{\columncolor{red!15}}c}{\textbf{64.12}} & \multicolumn{1}{>{\columncolor{red!15}}c}{\textbf{55.57}} & \multicolumn{1}{>{\columncolor{red!15}}c}{\textbf{48.78}} \\ 
\bottomrule
\end{tabular}
}
}
\vspace{-2mm}
\end{wraptable}
randomly adding/deleting edges with probability 0.3; and (3) \emph{both attribute and edge noise}. The clustering accuracy results in Table \ref{tab::noisy} show that, although noise degrades performance for all methods (cf. Table \ref{tab:result1}), our \method{} still consistently outperforms GraphLearner and MAGI. It highlights the strong redundancy-reduction and noise-resistant ability of compactness learning, as well as the role of consistency learning in capturing richer semantic information to ensure robustness when facing noise.

\subsection{Performance Comparison on Heterophilic Graphs} 

To further validate the applicability of our method, we compare the proposed \method{} with competitive baselines on the heterophilic graphs Cornell and Wisconsin \citep{pei2020geom}, and we also compare \method{} on homophilic graphs with method tailored for homophilic graphs, i.e., DGCN \citep{pan2023beyond}. From Table \ref{tab::hete}, our \method{} mostly outperforms the recent competitive baselines GraphLearner and MAGI on heterophilic graphs, likely due to its robustness against cross-class edges via compactness and consistency learning. From Table \ref{tab::homo}, \method{} also surpasses DGCN on homophilic graphs, further confirming the effectiveness and applicability of our \method{}.

\begin{table}[t]
\centering
\begin{minipage}{0.5\linewidth}
\caption{ Comparison on heterophilic graphs.}\label{tab::hete}
\centering
\setlength{\tabcolsep}{5pt}
\resizebox{1\textwidth}{!}{
\begin{tabular}{c|cc|cc|cc}
\toprule
 \multirow{2}{*}{\textbf{Dataset}} & \multicolumn{2}{>{\columncolor{gray!25}}c|}{\textbf{GraphLearner}} & \multicolumn{2}{>{\columncolor{gray!25}}c|}{\textbf{MAGI}} & \multicolumn{2}{>{\columncolor{gray!25}}c}{\textbf{\method{} (Ours)}} \\
 & ACC & \multicolumn{1}{c|}{NMI} & ACC & \multicolumn{1}{c|}{NMI} & ACC & NMI \\
\midrule
\textbf{Cornell} & 43.00 & 14.17 & 34.26 & 7.63 & \multicolumn{1}{>{\columncolor{red!15}}c}{\textbf{59.68}} & \multicolumn{1}{>{\columncolor{red!15}}c}{\textbf{24.34}} \\
\textbf{Wisconsin} & 46.06 & \multicolumn{1}{>{\columncolor{red!15}}c|}{\textbf{19.38}} & 33.39 & 8.40 & \multicolumn{1}{>{\columncolor{red!15}}c}{\textbf{56.39}} & 17.04 \\
\bottomrule
\end{tabular}
}
\end{minipage}%
\begin{minipage}{0.5\linewidth}
\caption{ Comparison on homophilic graphs.}\label{tab::homo}
\centering
\setlength{\tabcolsep}{7pt}
\resizebox{0.77\textwidth}{!}{
\begin{tabular}{c|cc|cc}
\toprule
 \multirow{2}{*}{\textbf{Dataset}} & \multicolumn{2}{>{\columncolor{gray!25}}c|}{\textbf{DGCN}} & \multicolumn{2}{>{\columncolor{gray!25}}c}{\textbf{\method{} (Ours)}} \\
& ACC & \multicolumn{1}{c|}{NMI} & ACC & NMI \\
\midrule
\textbf{Cora} & 72.89 & 56.82 & \multicolumn{1}{>{\columncolor{red!15}}c}{\textbf{79.36}} & \multicolumn{1}{>{\columncolor{red!15}}c}{\textbf{60.71}} \\
\textbf{AMAP} & 76.06 & 65.36 & \multicolumn{1}{>{\columncolor{red!15}}c}{\textbf{79.27}} & \multicolumn{1}{>{\columncolor{red!15}}c}{\textbf{68.85}}\\
\bottomrule
\end{tabular}
}
\end{minipage}
\end{table}

\subsection{Time and Space Complexity Comparison} \label{com}

\begin{figure}
\centering
\scriptsize  
\begin{minipage}{0.24\linewidth}
\centerline{\includegraphics[width=1\textwidth]{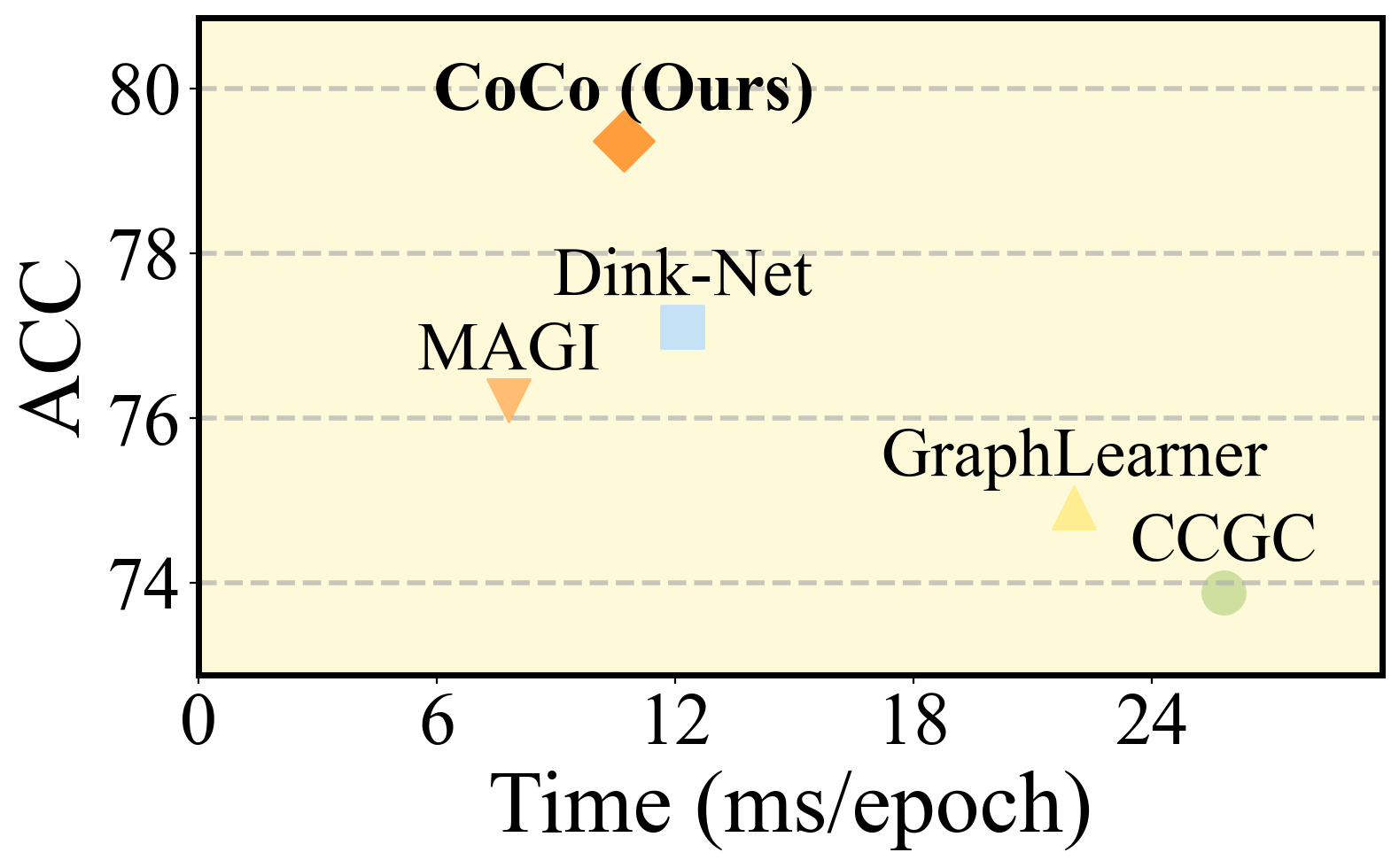}}
\centerline{(a) Cora-Runtime}
\end{minipage}
\begin{minipage}{0.24\linewidth}
\centerline{\includegraphics[width=1\textwidth]{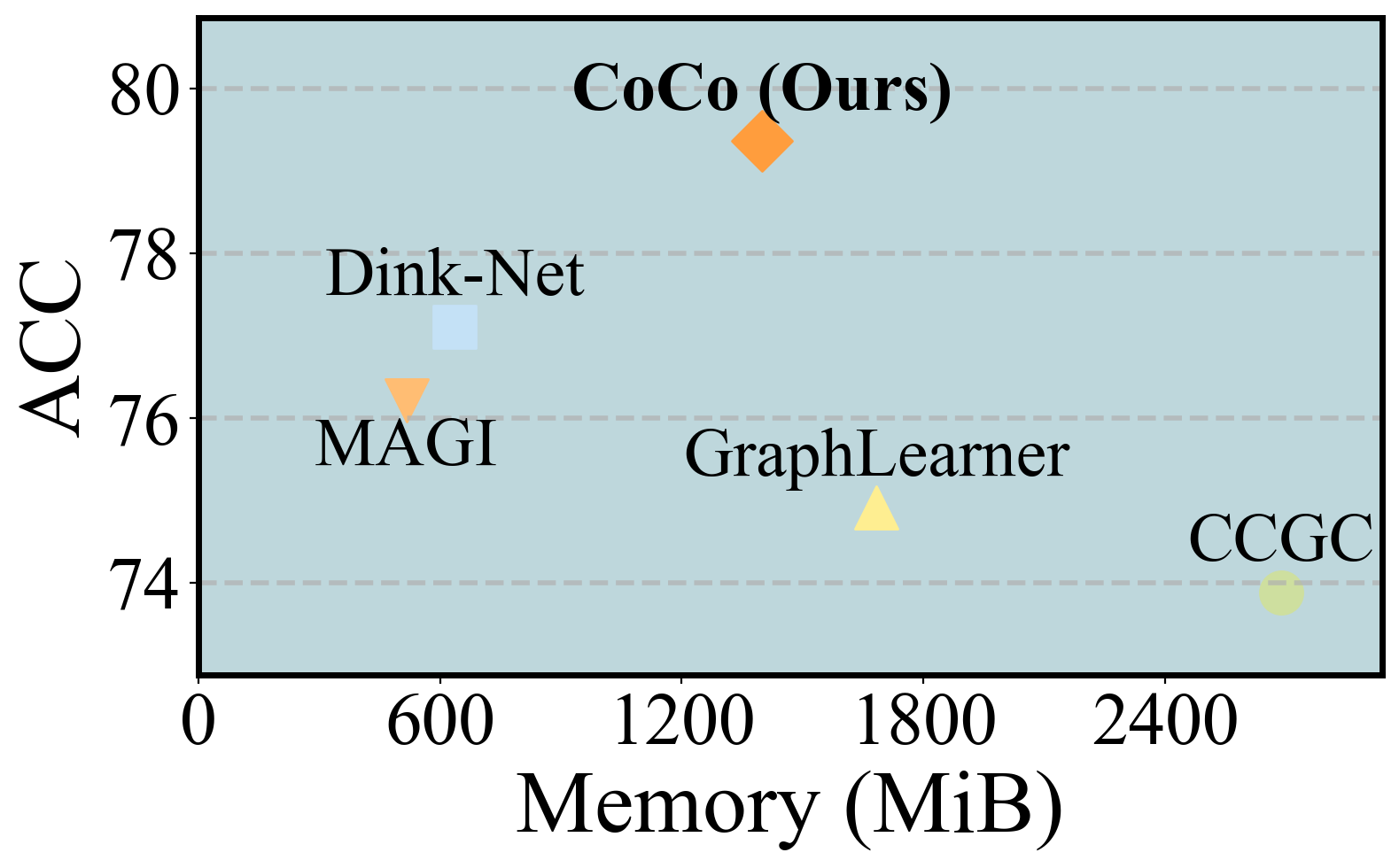}}
\centerline{(b) Cora-Memory}
\end{minipage}
\begin{minipage}{0.24\linewidth}
\centerline{\includegraphics[width=1\textwidth]{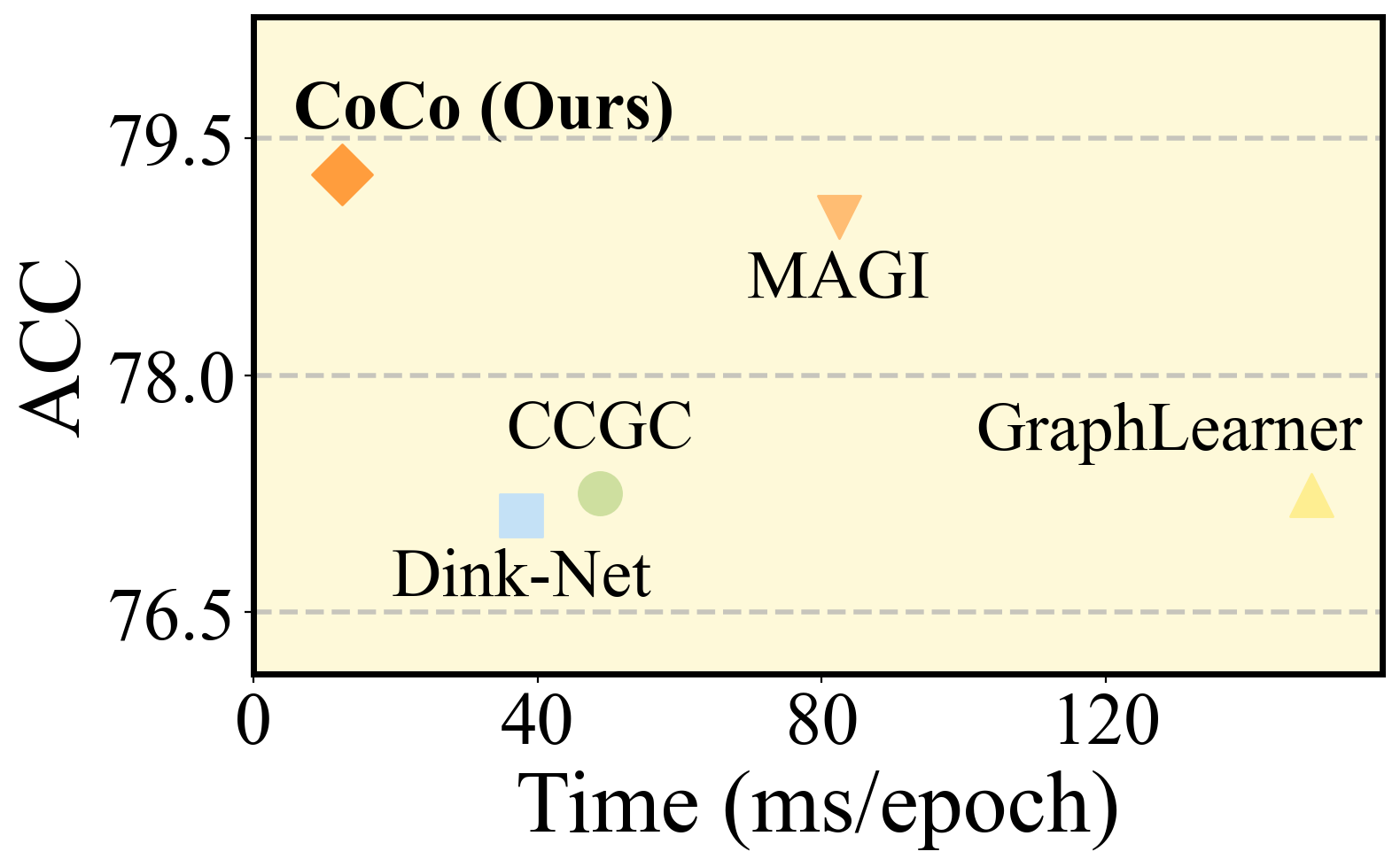}}
\centerline{(c) AMAP-Runtime}
\end{minipage}
\begin{minipage}{0.24\linewidth}
\centerline{\includegraphics[width=1\textwidth]{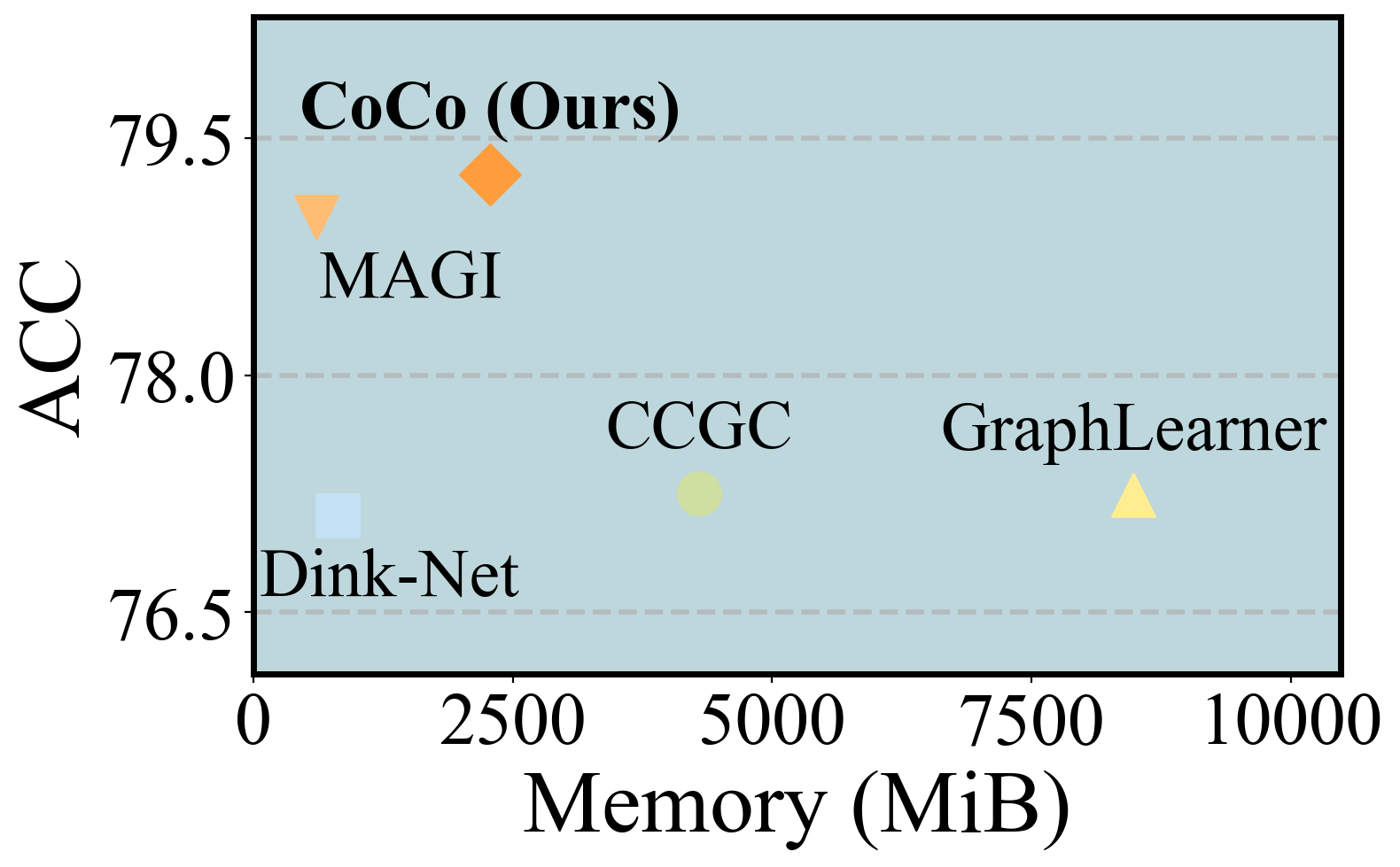}}
\centerline{(d) AMAP-Memory}
\end{minipage}
\caption{Comparisons of average training time per epoch and memory cost.}
\label{tab:time}
\end{figure}

In this part, we compare the time and space complexities of our \method{} with several latest and competitive methods, i.e., CCGC, Dink-Net, GraphLearner, MAGI, in Figure \ref{tab:time}. The results show a comparison of clustering accuracy (ACC, $\uparrow$) and average runtime/memory cost ($\downarrow$) among different methods. Our \method{} achieves best clustering performance on Cora and AMAP while maintaining relatively low time and space complexity, which further demonstrates the efficiency and scalability of our approach. Detailed theoretical analysis and comparison are provided in Appendix \ref{app::com}.

\begin{wraptable}{r}{0.5\textwidth}
\vspace{-4mm}
\centering
\caption{Impact of EM iterations.}
\label{tab:em_efficiency}
\setlength{\tabcolsep}{2.5pt}
\resizebox{0.5\textwidth}{!}{
\begin{tabular}{c|cccc}
\toprule
\multirow{2}{*}{\textbf{Iterations}} & \multicolumn{2}{>{\columncolor{gray!25}}c}{\textbf{Cora}} & \multicolumn{2}{>{\columncolor{gray!25}}c}{\textbf{AMAP}} \\
& \multicolumn{1}{>{\columncolor{gray!25}}c}{NMI} & \multicolumn{1}{>{\columncolor{gray!25}}c}{Efficiency} & \multicolumn{1}{>{\columncolor{gray!25}}c}{NMI} & \multicolumn{1}{>{\columncolor{gray!25}}c}{Efficiency} \\
\midrule
\multicolumn{1}{>{\columncolor{yellow!25}}c|}{1} & 47.99$\pm$4.07 & 1.62\% & 62.96$\pm$2.16 & 2.23\% \\
\multicolumn{1}{>{\columncolor{yellow!25}}c|}{5} & 59.81$\pm$0.73 & 2.93\% & 68.50$\pm$1.25 & 3.27\% \\
\multicolumn{1}{>{\columncolor{yellow!25}}c|}{10} & 60.71$\pm$0.59 & 4.28\% & 68.85$\pm$1.55 & 4.24\% \\
\multicolumn{1}{>{\columncolor{yellow!25}}c|}{20} & 60.69$\pm$0.57 & 6.69\% & 68.87$\pm$1.88 & 6.16\% \\
\multicolumn{1}{>{\columncolor{yellow!25}}c|}{30} & 60.70$\pm$0.60 & 8.46\% & 68.81$\pm$1.70 & 8.42\% \\
\bottomrule
\end{tabular}
}
\end{wraptable}
Besides, to explore the impact of EM iterations on efficiency, we here we examine how different iteration counts affect performance (using NMI as an example) and efficiency (EM time per iteration / total time per iteration), with the results presented in Table \ref{tab:em_efficiency}. From the results, we can see that although the proportion of EM iteration time in the total runtime gradually increases with the number of iterations, the performance stabilizes after 10 iterations, indicating that the model has already converged. Further increasing the number of EM iterations yields no additional benefits. Therefore, it can be concluded that the EM iterative algorithm accounts for only about 4\% of the total training time, demonstrating remarkably high efficiency.

\begin{wraptable}{r}{0.6\textwidth}
\vspace{-4mm}
\centering
\caption{Performance comparison on node classification task (OOM denotes Out of Memory).}
\label{tab:node_classification_performance}
\setlength{\tabcolsep}{2.5pt}
\resizebox{0.6\textwidth}{!}{
\begin{tabular}{c|ccccc}
\toprule
\multicolumn{1}{>{\columncolor{gray!25}}c|}{\textbf{Dataset}} & \multicolumn{1}{>{\columncolor{gray!25}}c}{\textbf{WikiCS}} & \multicolumn{1}{>{\columncolor{gray!25}}c}{\textbf{Computers}} & \multicolumn{1}{>{\columncolor{gray!25}}c}{\textbf{Photo}} & \multicolumn{1}{>{\columncolor{gray!25}}c}{\textbf{Coauthor CS}} & \multicolumn{1}{>{\columncolor{gray!25}}c}{\textbf{Coauthor Physics}} \\
\midrule
GCN & 77.19±0.12 & 86.51±0.54 & 92.42±0.22 & 93.03±0.31 & 95.65±0.16 \\
\midrule
node2vec & 71.79±0.05 & 84.39±0.08 & 89.67±0.12 & 85.08±0.03 & 91.19±0.04 \\ 
DeepWalk & 74.35±0.06 & 85.68±0.06 & 89.44±0.11 & 84.61±0.22 & 91.77±0.15 \\ 
\midrule
DGI & 75.35±0.14 & 83.95±0.47 & 91.61±0.22 & 92.15±0.63 & 94.51±0.52 \\
GMI & 74.85±0.08 & 82.21±0.31 & 90.68±0.17 & {OOM} & {OOM} \\
MVGRL & 77.52±0.08 & 87.52±0.11 & 91.74±0.07 & 92.11±0.12 & 95.33±0.03 \\
GCA & 78.30±0.62 & 88.49±0.51 & \multicolumn{1}{>{\columncolor{blue!15}}c}{\underline{92.99±0.27}} & 92.76±0.16 & {OOM} \\
GRACE & 78.25±0.65 & 88.15±0.43 & 92.52±0.32 & 92.60±0.11 & {OOM} \\
CCA-SSG & 77.88±0.41 & 87.01±0.41 & 92.59±0.25 & \multicolumn{1}{>{\columncolor{blue!15}}c}{\underline{92.77±0.17}} & 95.16±0.10 \\
BGRL & 79.36±0.53 & 88.35±0.32 & 92.87±0.27 & 91.72±0.21 & \multicolumn{1}{>{\columncolor{blue!15}}c}{\underline{95.43±0.09}} \\
GTCA & \multicolumn{1}{>{\columncolor{blue!15}}c}{\underline{79.58±0.65}} & \multicolumn{1}{>{\columncolor{blue!15}}c}{\underline{89.15±0.57}} & 92.97±0.31 & 92.33±0.18 & 95.24±0.07 \\
\midrule
{\textbf{\method{}}} & \multicolumn{1}{>{\columncolor{red!15}}c}{\textbf{80.06±0.45}} & \multicolumn{1}{>{\columncolor{red!15}}c}{\textbf{89.62±0.43}} & \multicolumn{1}{>{\columncolor{red!15}}c}{\textbf{93.34±0.37}} & \multicolumn{1}{>{\columncolor{red!15}}c}{\textbf{93.07±0.13}} & \multicolumn{1}{>{\columncolor{red!15}}c}{\textbf{95.52±0.08}} \\
\bottomrule
\end{tabular}
}
\end{wraptable}
\subsection{Evaluation on Node Classification Task} 
We further analyze the generalization ability of our \method{} to verify whether the learned node representations perform well on downstream tasks beyond graph clustering task, taking node classification as an example. We evaluate its performance under the predicted accuracy metric using widely adopted datasets including WikiCS~\citep{mernyei2020wiki}, 
Computers~\citep{mcauley2015image}, 
Photo~\citep{mcauley2015image}, Coauthor CS~\citep{sinha2015overview}, and Coauthor Physics~\citep{sinha2015overview}, and compare it with several competitive state-of-the-art methods, i.e., GCN~\citep{kipf2016semi}, node2vec~\citep{grover2016node2vec}, DeepWalk~\citep{perozzi2014deepwalk}, DGI~\citep{velickovic2019deep}, GMI~\citep{peng2020graph}, MVGRL~\citep{hassani2020contrastive}, GCA~\citep{zhu2020deep}, GRACE~\citep{zhu2021graph}, CCA-SSG~\citep{zhang2021canonical}, BGRL~\citep{thakoor2021large} and GTCA~\citep{liang2025gnn}. Specifically, we first perform unsupervised pre-training using the proposed framework, followed by supervised fine-tuning on a labeled dataset. As shown in Table \ref{tab:node_classification_performance}, our proposed \method{} outperforms all baseline methods across all datasets, demonstrating the strong generalizability of our learned node representations. This superior performance can be attributed to our method’s ability to capture structural information from multiple perspectives and learn low-rank node representations that align more closely with the data distribution, effectively supporting various downstream tasks.

\section{Conclusion}
\label{sec::conclusion}

In this paper, we propose a novel approach named \method{} for deep graph clustering.  
\method{} first encodes the attribute and topology information from local and global views. 
Then \method{} exploits low-rank embeddings via GMM to remove noise and redundancy, thereby uncovering the intrinsic structure of nodes. 
Based on the compact low-rank embeddings, 
\method{} also performs consistency learning of node similarities to enrich semantics information. Experiments under different data types (homophilic, heterophilic, and noisy graphs) and different tasks (graph clustering mainly and node classificaion) show that our \method{} outperforms the state-of-the-art methods. In addition, the comparison of spatiotemporal complexity demonstrates the efficiency of our method. 
For future work, we aim to extend our model to
temporal graph clustering and single-cell genomics clustering, facilitating the analysis of dynamic structures and accurate grouping of cells by genetic proﬁles.

\section*{Acknowledgments}
This work is supported in part by the National Natural Science Foundation of China under Grants 62306014, 12501344 and 12131001, the Fundamental Research Funds for the Central Universities, LPMC, and KLMDASR, the Postdoctoral Fellowship Program (Grade A) of CPSF under Grant BX20250376 and BX20240239, the China Postdoctoral Science Foundation under Grant 2024M762201, the Sichuan Science and Technology Program under Grant 2025ZNSFSC1506 and 2025ZNSFSC0808, and the Sichuan University Interdisciplinary Innovation Fund.

\bibliography{iclr2026_conference}
\bibliographystyle{iclr2026_conference}

\appendix

\section{Expectation-Maximization Algorithm for Gaussian Mixture Model}
\label{app::EM}

The Gaussian mixture model (GMM) \citep{richardson1997bayesian} is a linear superposition of multiple Gaussian components. Assuming GMM consists of $K$ Gaussians, given the observed data $\X = (\x_1,\ldots, \x_N)^{\top}$, the likelihood function is given by 
\begin{equation*}
p(\X | \btheta) = \prod_{i=1}^N \left(\sum_{k=1}^K \pi_k \mathcal{N}\left(\mathbf{x}_i | \boldsymbol{\mu}_k, \boldsymbol{\Sigma}_k\right)\right), 
\end{equation*}
where $\btheta = \{\pi_k, \bmu_k, \bSigma_k\}_{k=1}^K$, $\cN ( \x | \bmu_k , \bSigma_k)$ is the probability density function of the $k$-th Gaussian, and $\pi_k$ is the prior probability. 
We can employ the expectation-maximization (EM) algorithm \citep{dempster1977maximum} to fit GMM, in which the latent variables $\Y = (y_{ik}) \in \mathbb{R}^{N\times K}$ 
are introduced to indicate whether the $i$-th sample belongs to the $k$-th Gaussian by $y_{ik}$. 

In the E step, EM computes the posterior probability:
$$\gamma(y_{ik})= p(y_{ik}=1 | \x_i, \btheta^{\text{old}}) = \frac{\pi_k^{\text{old}} \cN (\x_i | \bmu_k^{\text{old}}, \bSigma_k^{\text{old}})}{\sum_{j=1}^K \pi_j^{\text{old}} \cN (\x_i | \bmu_j^{\text{old}}, \bSigma_j^{\text{old}})},$$
and the posterior expectation:
\begin{align*}
    Q(\btheta, \btheta^{\text{old}})  
  =  & \sum_{k=1}^K \sum_{i=1}^N \gamma(y_{ik}) (\log \pi_k + \log \cN(\x_i | \bmu_k, \bSigma_k)). 
\end{align*}
In the M step, EM maximizes $Q(\btheta, \btheta^{\text{old}})$ to achieve the updated parameters $\btheta^{\text{new}}$:
\begin{align*}
  \bmu_k^{\text{new}} = & \frac{1}{\sum_{i=1}^N \gamma(y_{ik})} \sum_{i=1}^N \gamma(y_{ik})\x_i, ~~ \pi_k^{\text{new}} = \frac{\sum_{i=1}^N \gamma(y_{ik})}{N}, \\
  \bSigma_k^{\text{new}} = & \frac{1}{\sum_{i=1}^N \gamma(y_{ik})} \sum_{i=1}^N \gamma(y_{ik}) (\x_i - \bmu_k^{\text{new}})(\x_i - \bmu_k^{\text{new}})^{\top}. 
\end{align*}

\section{Proofs}
\subsection{Proof of Theorem \ref{thm1}.} \label{app::thm1}
\begin{proof}
    The smoothness of a graph signal $\x$, defined on the vertices of the graph, can be characterized by the Rayleigh quotient based on the normalized graph Laplacian matrix $\tilde{\L}$ \citep{horn2012matrix}, i.e., ${\x}^{\top}\tilde{\L}{\x}/{\x}^{\top}{\x}$, which expresses the normalized variance score of ${\x}$. The smaller this value is, the smoother the signal is. 
    Denote by $\tilde{\U}^{\text{p}} = (\tilde{\u}_1^{\text{p}},\ldots,\tilde{\u}_N^{\text{p}})^{\top}$ and $\tilde{\bLambda}^{\text{p}} = \diag(\tilde{\lambda}_1^{\text{p}},\ldots,\tilde{\lambda}_N^{\text{p}})$ the eigenvector matrix and eigenvalue matrix of $\tilde{\L}^{\text{p}} (\text{p}=\text{l}, \text{g})$. 
    Based on the eigendecomposition of $\tilde{\L}^{\text{p}}$, the signal ${\x}$ can be decomposed into: $$\tilde{\U}^{\text{p}} \c^{\text{p}}=\sum_{i=1}^N c_i^{\text{p}} \tilde{\u}_i^{\text{p}}~ (\text{Fourier inverse transform}),$$ 
    where $\c^{\text{p}}$ is the coefficient vector. 
    Then the filtered signal by \Eqref{filtered_fea} is $\tilde{\x}^{\text{p}} = \sum_{i=1}^N (1-\tilde{\lambda}^{\text{p}}_i/k^{\text{p}})^t c_i^{\text{p}} \tilde{\u}_i^{\text{p}}$ and the corresponding Rayleigh quotient is: 
    $$\frac{\tilde{\x}^{\text{p} \top}\tilde{\L}^{\text{p}}\tilde{\x^{\text{p}}}}{\tilde{\x}^{\text{p} \top}\tilde{\x}^{\text{p}}} = \frac{\sum_{i=1}^N [(1-\tilde{\lambda}_i^{\text{p}}/k^{\text{p}})^t c_i^{\text{p}}]^2 \tilde{\lambda}_i^{\text{p}}}{\sum_{i=1}^N [(1-\tilde{\lambda}_i^{\text{p}}/k^{\text{p}})^t c_i^{\text{p}}]^2}.$$ 
    To filter high-frequency noise, $k^{\text{p}}$ should be set to ensure the low-pass property of the filter (for any $t$), also the smoothness. On the one hand, $1-\tilde{\lambda}_i^{\text{p}}/k^{\text{p}}$ should be non-negative. Hence, $k^{\text{p}} \geq \tilde{\lambda}_i^{\text{p}}$ for any $1 \leq i \leq N$, which leads to $k^{\text{p}} \geq \tilde{\lambda}_{\text{max}}^{\text{p}}$. 
    On the other hand, $1-\tilde{\lambda}_i^{\text{p}}/k^{\text{p}}$ should become smaller as $\tilde{\lambda}_i^{\text{p}}$ becomes larger to ensure that the filter captures low-frequency signal. So the value of $k^{\text{p}}$ should be small and the optimal $k^{\text{p}}$ is set to $\tilde{\lambda}_{\text{max}}^{\text{p}}$, which completes the proof.
\end{proof}

\subsection{Proof of Remark \ref{th1}.} \label{app::th1}
\begin{proof}
    By the conditional probability formula, we have:
    \begin{align*}
        \log p(\Z| \blambda^{\text{new}}) = & \log p(\Z,\Y | \blambda^{\text{new}}) - \log p(\Y| \Z, \blambda^{\text{new}}).
    \end{align*}
    Hence, we further have:
    \begin{align*}
        \log p(\Z| \blambda^{\text{new}}) = & \int p(\Y | \Z, \blambda^{\text{old}}) \log p(\Z| \blambda^{\text{new}}) d \Y \\
        = & \int p(\Y | \Z, \blambda^{\text{old}}) \log p(\Z,\Y | \blambda^{\text{new}}) d\Y -  \\
        & \int p(\Y | \Z, \blambda^{\text{old}}) \log p(\Y| \Z, \blambda^{\text{new}}) d\Y \\
        \triangleq & Q(\blambda^{\text{new}}, \blambda^{\text{old}}) - H(\blambda^{\text{new}}, \blambda^{\text{old}}).
    \end{align*}
    If $Q(\blambda^{\text{new}}, \blambda^{\text{old}}) \geq Q(\blambda^{\text{old}}, \blambda^{\text{old}})$ and $H(\blambda^{\text{new}}, \blambda^{\text{old}}) \leq H(\blambda^{\text{old}}, \blambda^{\text{old}})$ hold, we can conclude that:
    $$\log p(\Z|\blambda^{\text{new}}) \geq \log p(\Z|\blambda^{\text{old}}).$$

    In the E step, we maximized the $Q$-function, which implies that $Q(\blambda^{\text{new}}, \blambda^{\text{old}}) \geq Q(\blambda^{\text{old}}, \blambda^{\text{old}})$. As for the H-function, we have:
    \begin{align*}
        & H(\blambda^{\text{new}}, \blambda^{\text{old}}) - H(\blambda^{\text{old}}, \blambda^{\text{old}})\\
        = & \int p(\Y | \Z, \blambda^{\text{old}}) \log \frac{p(\Y | \Z, \blambda^{\text{new}})}{p(\Y | \Z, \blambda^{\text{old}})} d \Y \\
        \leq & \log \int p(\Y | \Z, \blambda^{\text{new}}) d \Y = \log 1 =0, 
    \end{align*}
    which follows from the Jensen inequality, i.e., $E(\log X) \leq \log E(X)$. The proof is completed.
\end{proof}

\subsection{Proof of Theorem \ref{th2}.} \label{app::th2}
\begin{proof}
    After the subspace training algorithm converges, according to \Eqref{em_lambda} and summing over $k$, we have:
    $$
    \sum_{k=1}^K \hat{\lambda}_{ik}\sum_{j=1}^D \hat{\gamma}(y_{jk}) = \sum_{k=1}^K \sum_{j=1}^D \hat{\gamma}(y_{jk})z_{ij}.
    $$
    Then the following equality also holds: 
    $$
    \sum_{j=1}^D \left(\sum_{k=1}^K \hat{\lambda}_{ik} \hat{\gamma}(y_{jk})\right) = \sum_{j=1}^D \left(\sum_{k=1}^K \hat{\gamma}(y_{jk}) \right) z_{ij}. 
    $$
    It implies that $\sum_{j=1}^D \hat{z}_{ij} = \sum_{j=1}^D {z}_{ij}$, which follows from that $\sum_{k=1}^K \hat{\gamma}(y_{jk}) = 1$ and \Eqref{recon} holds. 
    This guarantees that the total mass of each row in $\mathbf Z$ is preserved, meaning that the aggregated information associated with individual \(i\) remains unchanged after reconstruction. Also, it ensures that the individual-level signal encoded in the original embedding is retained.

    In addition, the Q-function defined in the proof of Remark \ref{th1} can be rewritten as:
    \begin{align*}
    Q(\blambda, \blambda^{\text{old}}) = & 
    \sum_{j=1}^D \sum_{k=1}^K \gamma(y_{jk}) \log \Big[ \frac{1}{(2\pi)^{\frac{\bar{N}}{2}} \sigma^{\frac{1}{2}}} \exp \left(-\frac{||\z_{\cdot,j} - \blambda_{\cdot,k}||^2}{2\sigma}
    \right)\Big] \\
    = & \sum_{j=1}^D \sum_{k=1}^K \gamma(y_{jk}) \log \frac{1}{(2\pi)^{\frac{\bar{N}}{2}} \sigma^{\frac{1}{2}}} + 
    \sum_{j=1}^D \sum_{k=1}^K \gamma(y_{jk}) \cdot \frac{||\z_{\cdot,j} - \blambda_{\cdot,k}||^2}{2\sigma}\\
    = & - D \log [(2\pi)^{\frac{\bar{N}}{2}} \sigma^{\frac{1}{2}}] + 
    \frac{1}{2\sigma} \sum_{i=1}^{\bar{N}} \sum_{j=1}^D \sum_{k=1}^K p(y_{jk}=1 | \z_{\cdot,j}, \blambda_{\cdot,k}) (z_{ij} - \lambda_{ik})^2,
    \end{align*}
    which is maximized in the M step. Obviously, continuously maximizing Q-function throughout the entire iteration process is equivalent to minimizing the objective: 
    \begin{equation}\label{eq::replace}
    \sum_{i=1}^{\bar{N}} \sum_{j=1}^D \sum_{k=1}^K p(y_{jk}=1 | \z_{\cdot,j}, \blambda_{\cdot,k}) (z_{ij} - \lambda_{ik})^2.
    \end{equation}
    Again, when converging, due to that $\sum_{k=1}^K \hat{\gamma}(y_{jk}) = 1$ and \Eqref{recon} holds, the optimized objective in \Eqref{eq::replace} can be written as:
    \begin{align*}
        & \sum_{i=1}^{\bar{N}} \sum_{j=1}^D \left( z_{ij}^2 + \sum_{k=1}^K \hat{\gamma}(y_{jk})\hat{\lambda}_{ik}^2  - 2z_{ij} \hat{z}_{ij} \right).
    \end{align*}
    As for the term $\sum_{k=1}^K \hat{\gamma}(y_{jk})\hat{\lambda}_{ik}^2$, we have:
    \begin{align*}
        & \sum_{j=1}^D \sum_{k=1}^K \hat{\gamma}(y_{jk}) \hat{\lambda}_{ik}^2
        = \sum_{k=1}^K \hat{\lambda}_{ik}^2 \sum_{j=1}^D \hat{\gamma}(y_{jk}) \\
        = & \sum_{k=1}^K \hat{\lambda}_{ik}^2 \cdot \frac{ \sum_{j=1}^D \hat{\gamma}(y_{jk})z_{ij}}{\hat{\lambda}_{ik}}  \qquad \text{(by \Eqref{em_lambda})} \\
        = & \sum_{j=1}^D z_{ij} \sum_{k=1}^K \hat{\lambda}_{ik} \hat{\gamma}(y_{jk}) = \sum_{j=1}^D z_{ij} \hat{z}_{ij}.
    \end{align*}
    Thus, we can obtain that the objective in \Eqref{eq::replace} is minimized to 
    \[
    \sum_{i=1}^{\bar N}\sum_{j=1}^D z_{ij}(z_{ij}-\hat{z}_{ij})
    = 
    \underbrace{\sum_{i,j} z_{ij}^2}_{\text{original total variation}}
    -
    \underbrace{\sum_{i,j} z_{ij} \hat{z}_{ij}}_{\text{alignment with reconstruction}}.
    \]
    The first term, \(\sum_{i,j} z_{ij}^2\), measures the total variation (energy) of the original embedding, while 
    \(\sum_{i,j} z_{ij} \hat{z}_{ij}\) quantifies how much of this variation is preserved in the reconstructed representation.
    Notably, compared with the self-energy \(\sum_{i,j} (\hat{z}_{ij})^2\), the cross-term \(z_{ij} \hat{z}_{ij}\) directly captures how well the reconstructed embedding aligns with the original signal rather than merely measuring its own magnitude. Hence, maximizing the cross-correlation term corresponds to keeping the dominant variations present in \(\mathbf Z\), and the above minimized objective brings a reconstruction \(\hat{\mathbf Z}\) that preserves as much of the original total variation as possible while suppressing redundant or noisy components, which completes the proof. 
\end{proof}

\section{More Discussions on the proposed \method{}}\label{app::dis}

\noindent\textbf{GMM \emph{v.s.} Other optional Rank Reduction Methods.}
In order to obtain low-rank representations, many methods can achieve this goal. However, the advantage of GMM lies in its linear modeling complexity, with time and space complexities of $O(NDK)$ and $O(ND)$, respectively, while PCA has time and space complexities of $O(ND^2+D^3)$ and $O(ND+D^2)$. When the number of features is high, the complexity of PCA is also high. Compared to K-means, since GMM is a probabilistic model and belongs to soft clustering algorithms, it is more robust to outliers and can be applied to a wider range of data distributions. Compared to methods like autoencoders for dimensionality reduction, GMM has fewer training parameters and lower time and space complexities. Therefore, our paper chooses GMM to achieve low-rank representation learning. Of course, other better ways to conduct low-rank space training can be further explored.

\smallskip
\noindent\textbf{Consistency Learning \emph{v.s.} Contrastive Learning.}
Contrastive learning~\citep{chen2020simple,you2020graph} shares similar ideas with our consistency learning, utilizing the similarity of data representations under different data augmentations to bring similar samples closer and push dissimilar samples apart. However, there are significant differences between contrastive learning and our consistency learning. First, the proposed consistency learning aims to uncover relationship information among non-iid nodes, thereby better reflecting the underlying structural distribution of nodes in the representation. Second, we consider consistency learning at the relational information level from both local and global perspectives, emphasizing alignment learning. This enables representations from two perspectives to interact, guide each other, and learn, thereby obtaining more semantically rich node representations to better serve graph clustering. Third, contrastive learning involves negative samples, whereas our consistency learning does not. In summary, contrastive learning focuses on sample discriminability at the instance level, whereas our consistency learning considers to bridge the gap between the considered local and global views at the relational distribution level. This is a higher-order, more global approach to capturing semantic knowledge in graph data.

Besides, in Table~\ref{tab:consistency} of the experiments section and Appendix \ref{app::addcon}, we also compare the performance of our consistency learning loss with the contrastive learning loss (InfoNCE)~\citep{chen2020simple,you2020graph}. It's clear that our consistency learning outperforms contrastive learning by a considerable margin, which fully demonstrates that our distribution-based consistency is more capable of capturing the intrinsic dependencies between nodes at a higher semantic level.

\section{Training Procedure}\label{app::algorithm}

The training procedure of the whole framework is shown in Algorithm \ref{alg:algorithm} below.  

\begin{algorithm}[h]
\caption{The Optimization Framework of \method{}}
\label{alg:algorithm}
\textbf{Input}: Graph $\mathcal{G} = \{\mathcal{V}, \mathcal{E}, \mathbf{X},\A\}$; Cluster number $C$; \\
\textbf{Output}: Clustering result $\c$;  
\begin{algorithmic}[1] 
\STATE Obtain the local- and global-view filtered attribute matrix $\tilde{\X}^{\text{l}}$ and $\tilde{\X}^{\text{g}}$ in \Eqref{filtered_fea}; 
\STATE Initialize the trainable parameters in $\text{MLP}_1$ and $\text{MLP}_2$; \\
\WHILE{not converge}
\STATE Update $\Z^{\text{l}}$ and $\Z^{\text{g}}$ by encoding $\tilde{\X}^{\text{l}}$ and $\tilde{\X}^{\text{g}}$ in the unshared MLP encoders; 
\STATE Update the optimal subspace $\hat{\bLambda}$ by performing iterations through \Eqref{em_lambda}; 
\STATE Update the reconstructed embeddings $\tilde{\Z}^{\text{l}}$ and $\tilde{\Z}^{\text{g}}$ by \Eqref{recon} and \Eqref{com_rep};
\STATE Update the similarity distributions $\p^i$ and $\q^i$ for $i=1,\ldots,N$;  
\STATE Calculate the loss $\cL$ in \Eqref{loss}; 
\STATE Perform backpropagation and update the entire network in \method{} by minimizing $\cL$;
\ENDWHILE
\STATE Derive clustering result $\c$ by applying $K$-means to the fused representations from \Eqref{fuse_rep};
\end{algorithmic}
\end{algorithm}

\section{Complexity Analysis}\label{app::com}

\begin{wraptable}{r}{0.6\textwidth}
\vspace{-4mm}
\centering
\caption{Theoretical analysis of complexity}
\label{app::tab:complexity}
\setlength{\tabcolsep}{7pt}
\resizebox{0.55\textwidth}{!}{
\begin{tabular}{ccc}
\toprule
\textbf{Method} & \textbf{Time} & \textbf{Space} \\
\midrule
MVGRL & $O(N^2D)$ & $O(N^2)$ \\
RGC & $O(N^2D)$ & $O(N^2)$ \\
CCGC & {$O(N^2D)$} & {$O(N^2)$} \\
Dink-Net & $O(NCD)$ & $O(NC+ND)$ \\
GraphLearner & $O(N^2D)$ & $O(N^2)$\\
MAGI & $O(ND^2)$ & $O(ND+|\mathcal{E}|+N^2)$ \\
\textbf{\method{} (Ours)} & \multicolumn{1}{>{\columncolor{red!15}}c}{$O(NDK+NMD)$} & \multicolumn{1}{>{\columncolor{red!15}}c}{$O(ND+NM)$} \\
\bottomrule
\end{tabular}
}
\vspace{-4mm}
\end{wraptable}
Given a graph dataset comprising $N$ nodes and $E$ edges. 
The sparse diffusion matrix $\hat{S}$ can be approximately calculated in linear time and space, i.e., $O(N)$ \citep{gasteiger2019diffusion}. Assume the number of non-zero elements in the sparse diffusion matrix $\hat{S}$ is denoted as $nnz(\hat{S})$, the computational complexity of graph convolutional filters is $O(nnz(\hat{S}) * N)$. For each dataset, they only need to be computed once at the beginning of training. 
In each epoch, 
GMM training and feature reconstruction have linear time complexity of $O(NDK)$ and space complexity of $O(ND)$. Consistency learning has time complexity of $O(NMD)$ and space complexity of $O(NM)$. In summary, the preprocessing time and space complexities of \method{} based on sparsification are $O(N)$ and $O(E)$, respectively. In each training epoch, the time complexity of \method{} is $O(NDK+NMD)$ and the space complexity of \method{} is $O(ND+NM)$. 

Comparison between our \method{} and several recent and competitive baselines is presented in the Table \ref{app::tab:complexity}. It can be observed that, comparing with the quadratic time complexity of MVGRL, RGC, CCGC and GraphLearner, our \method{} scales linearly with the number of samples, making it more suitable for large-scale datasets where computational efficiency is crucial.

\section{Details of Experimental Setup}\label{app::setup}
\noindent\textbf{Datasets.} 
Following \cite{yang2023cluster}, we assess the performance of our \method{} with five widely used benchmark datasets for deep graph clustering, i.e., Cora~\citep{sen2008collective}, AMAP~\citep{shchur2018pitfalls}, BAT~\citep{liu2023simple}, EAT~\citep{liu2023simple}, and UAT~\citep{liu2023simple}.

\smallskip
\noindent\textbf{Baseline Methods.}
To comprehensively assess the effectiveness of our proposed \method{}, we benchmark it against leading state-of-the-art methods, including autoencoder-based deep graph clustering/self-supervised learning methods: DEC~\citep{xie2016unsupervised}, IDEC~\citep{guo2017improved}, DAEGC~\citep{wang2019attributed}, ARGA~\citep{pan2019learning}, SDCN~\citep{bo2020structural}, DFCN~\citep{tu2021deep}; and contrastive deep graph clustering/self-supervised learning methods: AGE~\citep{cui2020adaptive}, MVGRL~\citep{hassani2020contrastive}, GDCL~\citep{zhao2021graph}, AutoSSL~\citep{jin2022automated}, AGC-DRR~\citep{gong2022attributed}, AFGRL~\citep{lee2022augmentation}, GDCL~\citep{zhao2021graph}, ProGCL~\citep{xia2022progcl}, RGC~\citep{liu2023reinforcement},  Dink-Net~\citep{liu2023dink}, CCGC~\citep{yang2023cluster}, GraphLearner~\citep{yang2024graphlearner}, and MAGI \citep{liu2024revisiting}.

\smallskip
\noindent\textbf{Evaluation Metrics and Implementation Details.} 
We adopt four benchmark metrics following \cite{bo2020structural} for evaluation: Accuracy (ACC), Normalized Mutual Information (NMI), Average Rand Index (ARI), and Macro F1-score (F1). Larger values imply better clustering results. For each method, we present the mean and standard deviation of the four metrics across 10 runs. 
The proposed model is implemented with PyTorch and all experiments are carried out on NVIDIA GeForce RTX 4090. 
The hidden dimension $d$ is set to 500 for AMAP/UAT, and 1500 for other datasets. We set the total number of training epochs to 1000 and the number of filter layers to 3. The dimension of the subspace $K$ is set to 64 and the number of the selected anchor samples $M$ is set to 128. 
The parameters of temperature $\tau$ and teleport probability $\alpha$ are set to 0.02 and 0.2, respectively.

\section{Additional Main Experimental Results} 
\label{app::main}

\begin{table*}[h]
\caption{Clustering performance on five benchmark datasets (mean ± standard deviation). The top two results for each method are marked in \textbf{bold} and \underline{underline}, respectively.}
\label{tab:result2}
\centering
\setlength{\tabcolsep}{2.2pt}
\resizebox{1\textwidth}{!}{
\begin{tabular}{c|c|cccccccccc|c}
\toprule
{\textbf{Dataset}} & {\textbf{Metric}} & {\textbf{DEC}} & {\textbf{IDEC}}  &{\textbf{DAEGC}} &{\textbf{ARGA}}  & {\textbf{AGE}} & {\textbf{MVGRL}}  &{\textbf{GDCL}}  & {\textbf{AGC-DRR}}   & {\textbf{RGC}} & {\textbf{Dink-Net}} & {\textbf{\method{} (Ours)}} \\
\midrule
\multirow{4}{*}{\textbf{Cora}} 
& {ACC}                                                                  
& {46.50±0.26}   
& {51.61±1.02}
& {70.43±0.36}      
& {71.04±0.25}   
& {73.50±1.83} 
& {70.47±3.70}           
& {70.83±0.47} 
& {40.62±0.55}       
& {-}  
& \multicolumn{1}{>{\columncolor{blue!15}}c}{\underline{77.11±0.10}}   
& \multicolumn{1}{>{\columncolor{red!15}}c}{\textbf{79.36±0.69}}  \\
& {NMI}                                                                  
& {23.54±0.34}        
& {26.31±1.22}
& {52.89±0.69}        
& {51.06±0.52}  
& {57.58±1.42}
& {55.57±1.54}         
& {56.30±0.36}  
& {18.74±0.73}     
& {57.60+1.36} 
& \multicolumn{1}{>{\columncolor{blue!15}}c}{\underline{59.76±0.10}}     
& \multicolumn{1}{>{\columncolor{red!15}}c}{\textbf{60.71±0.59}} \\
& {ARI}                                                                  
& {15.13±0.42}         
& {22.07±1.53} 
& \multicolumn{1}{>{\columncolor{blue!15}}c}{\underline{49.63±0.43}}       
& {47.71±0.33}         
& {48.05±0.72} 
& {48.70±3.94}     
& {50.10±2.14}  
& {14.80±1.64}
& {49.46+2.72}  
& {38.16±0.13}    
& \multicolumn{1}{>{\columncolor{red!15}}c}{\textbf{58.76±1.47}} \\
& {F1}                                                                   
& {39.23±0.17}         
& {47.17±1.12} 
& {68.27±0.57}      
& {69.27±0.39}  
& {69.28±1.59}
& {67.15±1.86}           
& {52.88±0.97} 
& {31.23±0.57}       
& {-} 
& \multicolumn{1}{>{\columncolor{blue!15}}c}{\underline{74.76±0.10}}    
& \multicolumn{1}{>{\columncolor{red!15}}c}{\textbf{77.95±0.72}} \\
\midrule
\multirow{4}{*}{\textbf{AMAP}} 
& {ACC}                                                                  
& {47.22±0.08}         
& {47.62±0.08}  
& {75.96±0.23}      
& {69.28±2.30}   
& {75.98±0.68} 
& {41.07±3.12}        
& {43.75±0.78}   
& {76.81±1.45}     
& {-} 
& \multicolumn{1}{>{\columncolor{blue!15}}c}{\underline{79.11±0.43}}     
& \multicolumn{1}{>{\columncolor{red!15}}c}{\textbf{79.27±0.70}} \\
& {NMI}                                                                  
& {37.35±0.05}         
& {37.83±0.08} 
& {65.25±0.45}       
& {58.36±2.76}    
& {65.38±0.61}
& {30.28±3.94}      
& {37.32±0.28}  
& {66.54±1.24}      
& \multicolumn{1}{>{\columncolor{red!15}}c}{\textbf{69.61+0.36}}
& {67.70±0.30}      
& \multicolumn{1}{>{\columncolor{blue!15}}c}{\underline{68.85±1.55}} \\
& {ARI}                                                                  
& {18.59±0.04}         
& {19.24±0.07}  
& {58.12±0.24}     
& {44.18±4.41}  
& {55.89±1.34} 
& {18.77±2.34}         
& {21.57±0.51}   
& \multicolumn{1}{>{\columncolor{blue!15}}c}{\underline{60.15±1.56}}     
& {59.58+0.39}  
& {59.76±0.76}    
& \multicolumn{1}{>{\columncolor{red!15}}c}{\textbf{60.94±1.51}} \\
& {F1}                                                                   
& {46.71±0.12}         
& {47.20±0.11} 
& {69.87±0.54}       
& {64.30±1.95}  
& {71.74±0.93} 
& {32.88±5.50}           
& {38.37±0.29}   
& {71.03±0.64}     
& {-}  
& \multicolumn{1}{>{\columncolor{red!15}}c}{\textbf{72.86±0.32}}     
& \multicolumn{1}{>{\columncolor{blue!15}}c}{\underline{72.36±1.15}} \\ 
\midrule
\multirow{4}{*}{\textbf{BAT}} 								
& {ACC}                                                                  
& {42.09±2.21}         
& {39.62±0.87} 
& {52.67±0.00}       
& \multicolumn{1}{>{\columncolor{blue!15}}c}{\underline{67.86±0.80}}   
& {56.68±0.76} 
& {37.56±0.32}       
& {45.42±0.54}  
& {47.79±0.02}      
& {-} 
& {54.20±0.13}     
& \multicolumn{1}{>{\columncolor{red!15}}c}{\textbf{78.85±0.91}} \\
& {NMI}                                                                  
& {14.10±1.99}         
& {12.80±1.74}  
& {21.43±0.35}      
& {49.09±0.54}  
& {36.04±1.54} 
& {29.33±0.70}          
& {31.70±0.42} 
& {19.91±0.24}      
& \multicolumn{1}{>{\columncolor{blue!15}}c}{\underline{51.58+0.83}}
& {28.33±0.32}      
& \multicolumn{1}{>{\columncolor{red!15}}c}{\textbf{55.00±0.87}} \\
& {ARI}                                                                  
& {07.99±1.21}        
& {07.85±1.31} 
& {18.18±0.29}       
& {42.02±1.21}  
& {26.59±1.83} 
& {13.45±0.03}         
& {19.33±0.57}   
& {14.59±0.13}    
& \multicolumn{1}{>{\columncolor{blue!15}}c}{\underline{47.16+1.35}}  
& {23.00±0.54}    
& \multicolumn{1}{>{\columncolor{red!15}}c}{\textbf{53.52±1.15}} \\
& {F1}                                                                   
& {42.63±2.35}         
& {40.11±0.99} 
& {52.23±0.03}      
& \multicolumn{1}{>{\columncolor{blue!15}}c}{\underline{67.02±1.15}}  
& {55.07±0.80} 
& {29.64±0.49}         
& {39.94±0.57}  
& {42.33±0.51}      
& {-}  
& {53.22±0.17}       
& \multicolumn{1}{>{\columncolor{red!15}}c}{\textbf{78.56±1.01}} \\ 
\midrule
\multirow{4}{*}{\textbf{EAT}}									
& {ACC}                                                                  
& {36.47±1.60}         
& {35.56±1.34} 
& {36.89±0.15}       
& \multicolumn{1}{>{\columncolor{blue!15}}c}{\underline{52.13±0.00}}   
& {47.26±0.32}
& {32.88±0.71}        
& {33.46±0.18}   
& {37.37±0.11}    
& {-}
& {50.78±0.17}      
& \multicolumn{1}{>{\columncolor{red!15}}c}{\textbf{58.87±0.49}} \\
& {NMI}                                                                  
& {04.96±1.74}       
& {04.63±0.97} 
& {05.57±0.06}        
& {22.48±1.21}  
& {23.74±0.90}
& {11.72±1.08}         
& {13.22±0.33}     
& {07.00±0.85}   
& \multicolumn{1}{>{\columncolor{red!15}}c}{\textbf{37.77+0.13}}   
& {21.66±0.36}   
& \multicolumn{1}{>{\columncolor{blue!15}}c}{\underline{34.10±1.26}} \\
& {ARI}                                                                  
& {03.60±1.87}         
& {03.19±0.76}
& {05.03±0.08}        
& {17.29±0.50} 
& {16.57±0.46} 
& {04.68±1.30}           
& {04.31±0.29}    
& {04.88±0.91}    
& \multicolumn{1}{>{\columncolor{red!15}}c}{\textbf{30.16+0.15}}   
& {18.87±0.66}   
& \multicolumn{1}{>{\columncolor{blue!15}}c}{\underline{27.91±1.52}} \\
& {F1}                                                                   
& {34.84±1.28}         
& {35.52±1.50} 
& {34.72±0.16}       
& {52.75±0.07}  
& {45.54±0.40} 
& {25.35±0.75}        
& {25.02±0.21} 
& {35.20±0.17}       
& {-}  
& \multicolumn{1}{>{\columncolor{blue!15}}c}{\underline{48.28±0.19}}     
& \multicolumn{1}{>{\columncolor{red!15}}c}{\textbf{58.06±2.64}} \\
\midrule
\multirow{4}{*}{\textbf{UAT}}								
& {ACC}                                                                  
& {45.61±1.84}         
& {46.90±0.17} 
& {52.29±0.49}       
& {49.31±0.15}  
& {52.37±0.42}  
& {44.16±1.38}        
& {48.70±0.06}   
& {42.64±0.31}     
& {-}   
& \multicolumn{1}{>{\columncolor{blue!15}}c}{\underline{57.65±0.09}}     
& \multicolumn{1}{>{\columncolor{red!15}}c}{\textbf{59.68±0.36}} \\
& {NMI}                                                                  
& {16.63±2.39}         
& {17.84±0.35} 
& {21.33±0.44}      
& {25.44±0.31} 
& {23.64±0.66}
& {21.53±0.94}            
& {25.10±0.01}  
& {11.15±0.24}      
& \multicolumn{1}{>{\columncolor{blue!15}}c}{\underline{28.79±0.35}} 
& {25.28±0.15}      
& \multicolumn{1}{>{\columncolor{red!15}}c}{\textbf{30.12±0.51}} \\
& {ARI}                                                                  
& {13.14±1.97}         
& {16.34±0.40}   
& {20.50±0.51}     
& {16.57±0.31}   
& {20.39±0.70} 
& {17.12±1.46}         
& {21.76±0.01}  
& {09.50±0.25}      
& {19.89±1.30}  
& \multicolumn{1}{>{\columncolor{blue!15}}c}{\underline{26.42±0.14}}     
& \multicolumn{1}{>{\columncolor{red!15}}c}{\textbf{29.46±0.47}} \\
& {F1}                                                                   
& {44.22±1.51}         
& {46.51±0.17}  
& {50.33±0.64}      
& {50.26±0.16}  
& {50.15±0.73}  
& {39.44±2.19}      
& {45.69±0.08}  
& {35.18±0.32}      
& {-}  
& \multicolumn{1}{>{\columncolor{blue!15}}c}{\underline{54.52±0.10}}   
& \multicolumn{1}{>{\columncolor{red!15}}c}{\textbf{58.03±0.34}} \\ 
\bottomrule
\end{tabular}
}
\end{table*}

Apart from the comparisons with a range of competitive baselines in Table~\ref{tab:result1}, we carry out more extensive comparative experiments herein. Table \ref{tab:result2} presents a performance comparison between our proposed \method{} and several other graph clustering or unsupervised learning methods. 
It can be seen that RGC~\citep{liu2023reinforcement} outperforms our model on certain metrics in AMAP and EAT, possibly because RGC can automatically learn the number of clusters, which may, to some extent, better approximate the true data distribution. However, we still achieve the best performance on most datasets. It can be attributed to the fact that our proposed \method{} captures complementary semantic information from both local and global views, enabling a more comprehensive exploration of the graph's characteristics. Moreover, our \method{} learns low-rank embeddings, which effectively removes redundancies and noise. Finally, consistency learning enriches the semantics of node representations from both local and global perspectives. Hence, different modules complement each other to facilitate better clustering assignments.

\section{Additional Results on Influence of Consistency Learning} 
\label{app::addcon}

\begin{table}[h]
\vspace{-3mm}
\caption{The comparative analysis of consistency learning across all datasets.}
\label{app:consistency}
\centering
\resizebox{0.65\textwidth}{!}{
\begin{tabular}{c|l|cccc}
\toprule
{\bf Dataset} & {\bf Loss} &   {\bf ACC} &  {\bf NMI} &  {\bf ARI} &  {\bf F1} \\ 
\midrule
\multirow{3}{*}{\textbf{Cora}} 
            & MSE & 77.84±0.67 & 60.31±0.89 & 57.81±1.42 & 73.89±1.09 \\
            & InfoNCE & 75.57±1.16 & 58.03±1.44 & 54.69±1.85 & 72.58±1.81 \\
            & \multicolumn{1}{>{\columncolor{yellow!25}}c|}{Consistency} & \multicolumn{1}{>{\columncolor{red!15}}c}{\textbf{79.36+0.69}} & \multicolumn{1}{>{\columncolor{red!15}}c}{\textbf{60.71±0.59}} & \multicolumn{1}{>{\columncolor{red!15}}c}{\textbf{58.76±1.47}} & \multicolumn{1}{>{\columncolor{red!15}}c}{\textbf{77.95±0.72}} \\
\midrule
\multirow{3}{*}{\textbf{AMAP}} 
            & MSE & 77.62±0.44 & 67.68±0.76 & 58.51±0.97 & 71.84±0.77  \\
            & InfoNCE & 77.25±0.33 & 67.12±0.46 & 58.24±0.57 & 71.89±0.53 \\
            & \multicolumn{1}{>{\columncolor{yellow!25}}c|}{Consistency} & \multicolumn{1}{>{\columncolor{red!15}}c}{\textbf{79.27±0.70}} & \multicolumn{1}{>{\columncolor{red!15}}c}{\textbf{68.85±1.55}} & \multicolumn{1}{>{\columncolor{red!15}}c}{\textbf{60.94±1.51}} & \multicolumn{1}{>{\columncolor{red!15}}c}{\textbf{72.36±1.15}} \\
\midrule
\multirow{3}{*}{\textbf{BAT}} 
            & MSE & 76.87±0.60 & 53.30±1.18 & 50.85±1.07 & 76.32±0.71 \\
            & InfoNCE & 78.70±0.53 & 53.99±0.74 & 51.85±1.14 & 78.03±0.51 \\
            & \multicolumn{1}{>{\columncolor{yellow!25}}c|}{Consistency} & \multicolumn{1}{>{\columncolor{red!15}}c}{\textbf{78.85±0.91}} & \multicolumn{1}{>{\columncolor{red!15}}c}{\textbf{55.00±0.87}} & \multicolumn{1}{>{\columncolor{red!15}}c}{\textbf{53.52±1.15}}  & \multicolumn{1}{>{\columncolor{red!15}}c}{\textbf{78.56±1.01}} \\
\midrule
\multirow{3}{*}{\textbf{EAT}} 
            & MSE & 57.42±0.45 & 32.80±0.86 & 26.27±0.74 & 57.83±0.38  \\
            & InfoNCE & 55.26±1.25 & 30.34±0.96 & 23.76±0.94 & 55.53±0.94 \\
            & \multicolumn{1}{>{\columncolor{yellow!25}}c|}{Consistency} & \multicolumn{1}{>{\columncolor{red!15}}c}{\textbf{58.87±0.49}} & \multicolumn{1}{>{\columncolor{red!15}}c}{\textbf{34.10±1.26}} & \multicolumn{1}{>{\columncolor{red!15}}c}{\textbf{27.91±1.52}} & \multicolumn{1}{>{\columncolor{red!15}}c}{\textbf{58.06±2.64}} \\
\midrule
\multirow{3}{*}{\textbf{UAT}} 
            & MSE & 57.18±0.74 & 28.43±0.62 & 25.65±1.11 & 56.96±0.73 \\
            & InfoNCE & 56.72±0.23 & 27.67±0.42 & 25.01±0.45 & 56.39±0.39 \\
            & \multicolumn{1}{>{\columncolor{yellow!25}}c|}{Consistency} & \multicolumn{1}{>{\columncolor{red!15}}c}{\textbf{59.68±0.36}} & \multicolumn{1}{>{\columncolor{red!15}}c}{\textbf{30.12±0.51}} & \multicolumn{1}{>{\columncolor{red!15}}c}{\textbf{29.46±0.47}} & \multicolumn{1}{>{\columncolor{red!15}}c}{\textbf{58.03±0.34}} \\
\bottomrule
\end{tabular}
}
\end{table}

To further comprehensively verify the effectiveness of our proposed consistency learning strategy, we conduct additional experiments on two more datasets beyond Table~\ref{tab:consistency}. As clearly shown in Table \ref{app:consistency}, the reported results consistently show that our consistency learning significantly outperforms both MSE and InfoNCE across all evaluated datasets. This superior overall performance strongly highlights the ability of our method to effectively capture high-order semantic similarity relations via the powerful distribution-based alignment. The reason why the two compared metrics perform poorly may be that MSE only focuses on point-wise errors and ignores inter-sample correlations, while InfoNCE is prone to instability due to the influence of negative sample selection. Overall, these additional results provide a broader and more empirical basis for our conclusions, further reinforcing the superiority, and robustness of our proposed consistency learning approach.

\vspace{-1mm}
\section{Additional Sensitivity Analysis}
\label{app::addsen}

\begin{figure*}[h]
\centering
\scriptsize 
\begin{minipage}{0.19\linewidth}
\centerline{\includegraphics[width=1\textwidth]{fig/weight_cora_k.pdf}}
\centerline{\includegraphics[width=\textwidth]{fig/weight_cora_m.pdf}}
\centerline{\includegraphics[width=\textwidth]{fig/weight_cora_alpha.pdf}}
\centerline{(a) Cora}
\end{minipage}
\begin{minipage}{0.19\linewidth}
\centerline{\includegraphics[width=\textwidth]{fig/weight_amap_k.pdf}}
\centerline{\includegraphics[width=1\textwidth]{fig/weight_amap_m.pdf}}
\centerline{\includegraphics[width=1\textwidth]{fig/weight_amap_alpha.pdf}}
\centerline{(b) AMAP}
\end{minipage}
\begin{minipage}{0.19\linewidth}
\centerline{\includegraphics[width=1\textwidth]{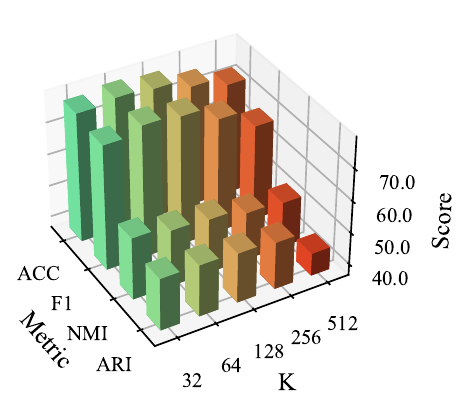}}
\centerline{\includegraphics[width=\textwidth]{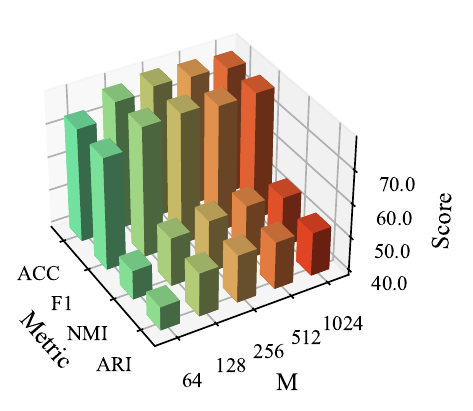}}
\centerline{\includegraphics[width=\textwidth]{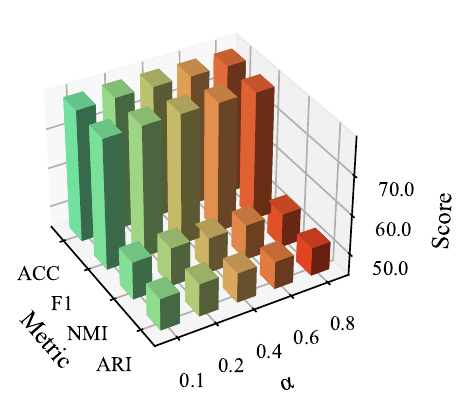}}
\centerline{(c) BAT}
\end{minipage}
\begin{minipage}{0.19\linewidth}
\centerline{\includegraphics[width=\textwidth]{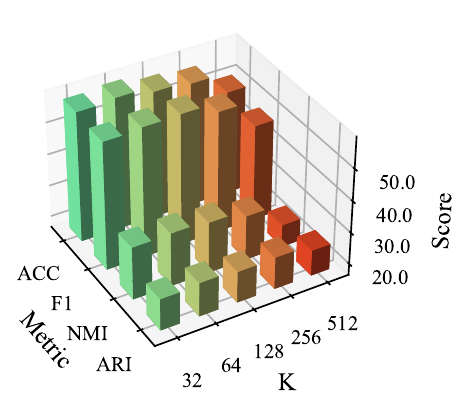}}
\centerline{\includegraphics[width=1\textwidth]{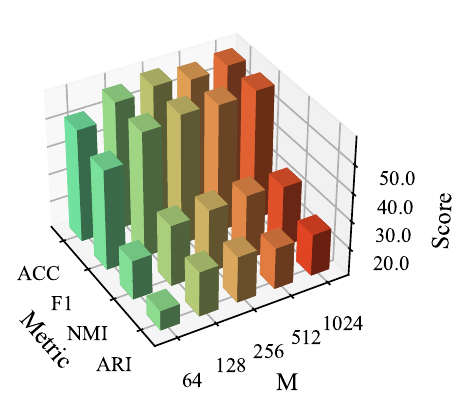}}
\centerline{\includegraphics[width=1\textwidth]{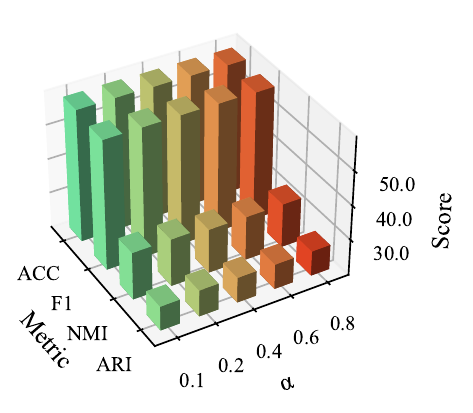}}
\centerline{(d) EAT}
\end{minipage}
\begin{minipage}{0.19\linewidth}
\centerline{\includegraphics[width=1\textwidth]{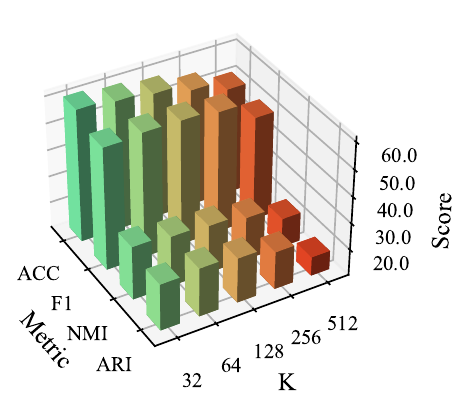}}
\centerline{\includegraphics[width=\textwidth]{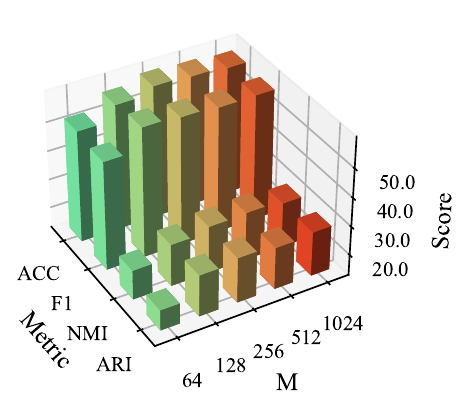}}
\centerline{\includegraphics[width=\textwidth]{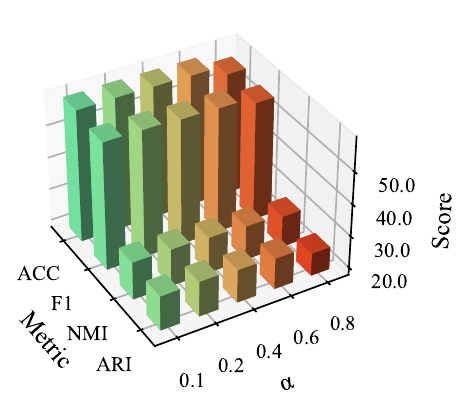}}
\centerline{(e) UAT}
\end{minipage}
\caption{The sensitivity analysis of three hyper-parameters on all five datasets.}
\label{app:Sensitivity_KM}
\end{figure*}
 
Here we extend the sensitivity analysis of the key hyper-parameters $K$, $M$ and $\alpha$ to three additional datasets. The results in Figure \ref{app:Sensitivity_KM} show that the clustering performance on all datasets increases slowly as $K$ ranges from 32 to 128, but starts to decline when $K$ becomes too large. This is consistent with the observations on the two datasets in Figure \ref{Sensitivity_KM} of the main text, indicating that a too high-dimensional subspace introduces redundant or irrelevant information, which hinders the compactness of node embeddings. For the number of anchor samples $M$, the model's performance is poor when $M$ is very small, due to the insufficient anchor samples to represent the neighborhood structure of nodes. As $M$ increases, the performance improves and remains stable, which aligns with the findings in the main text. For the teleport probability $\alpha$, the model performance is stable at low $\alpha$ values but deteriorates with further increases, showing a marked decline at $\alpha$=0.8. This trend aligns with findings in the main text. The degradation occurs because a higher $\alpha$ results in a more localized diffusion process, which over-prioritizes the node's own information. Conversely, a smaller $\alpha$ promotes global integration of graph information, which is essential for learning diverse and complementary representations. These consistent trends across different datasets further validate the robustness of our model with respect to the three hyper-parameters.

\section{Graph Convolutional Filter $v.s.$ GCN}
\label{app::t}

To show that our proposed disentangling of graph convolutional filters (GCF) and weight matrices can enhance the robustness of the model, we conduct comparative experiments between GCF+MLP
\begin{wrapfigure}{r}{0.6\textwidth}
\centering
\small
\begin{minipage}{0.48\linewidth}
\centerline{\includegraphics[width=1\textwidth]{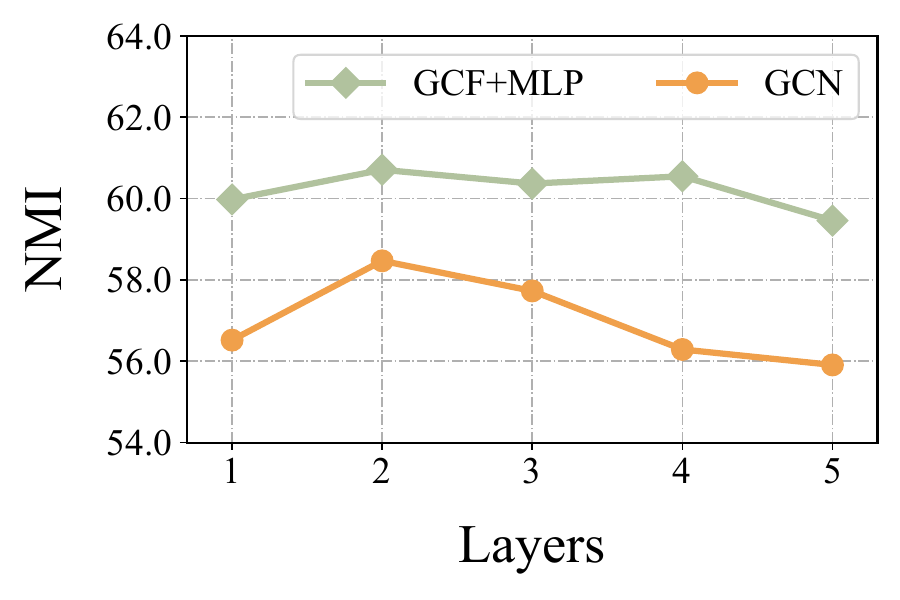}}
\centerline{(a) Cora}
\end{minipage}
\begin{minipage}{0.48\linewidth}
\centerline{\includegraphics[width=1\textwidth]{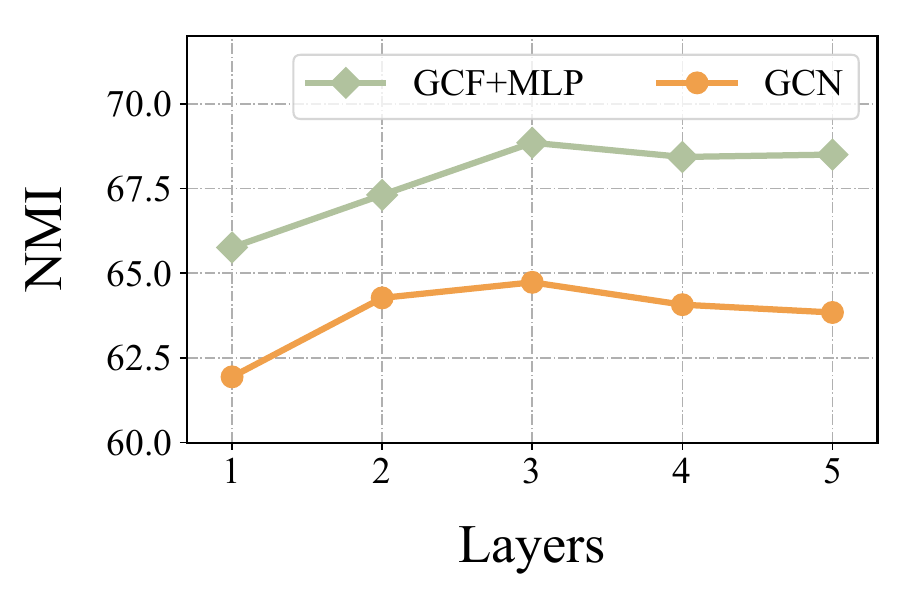}}
\centerline{(b) AMAP}
\end{minipage}
\vspace{-1mm}
\caption{Robustness analysis.}
\label{fig:robust}
\end{wrapfigure}
and GCN under the \method{} framework as the number of convolutional layers (i.e., the parameter $t$ in GCF, \Eqref{filtered_fea}) increased. The results are shown in Figure~\ref{fig:robust}. It can be observed that GCF+MLP consistently outperforms GCN in terms of clustering performance across different convolutional layers. We also note that the performance of GCF+MLP remains relatively stable as the number of layers ($t$ in \Eqref{filtered_fea}) increases, whereas GCN tends to first improve and then deteriorate with the addition of more layers in most cases. This evidences the high performance and robustness brought about by disentangling the GCF from the weight matrices within our \method{} framework.

\section{Related Work}
\label{sec::related}

\subsection{Deep Graph Clustering}
Over the years, substantial research efforts have been devoted to advancing deep graph clustering, with the goal of partitioning nodes in a graph into cohesive and well-connected clusters in an end-to-end framework~\citep{liu2022survey}. Deep learning breakthroughs have enabled the development of graph algorithms that leverage both graph structures and node attributes to tackle this inherently complex task~\citep{shi2019robust,cheng2021multi,yang2023cluster,yi2023redundancy,liu2024reliable,trivedi2024large,guo2025disentangling,ren2025mhgc,wang2025deepcluster}. 
Specifically, SDCN~\citep{bo2020structural} and DFCN~\citep{tu2021deep} provide a foundation for integrating graph structure into clustering objectives, laying the groundwork for more recent models. Several advanced methods have since been proposed, each targeting specific challenges in graph clustering. AGCN~\citep{peng2021attention} designs a dynamic attention mechanism to adaptively aggregate the multi-scale features, while MCGC~\citep{pan2021multi} learns a consensus graph by filtering out high-frequency noise, preserving graph geometric features. GLCC~\citep{ju2023glcc} is the first to extend clustering tasks to the graph-level scenario using the concept of contrastive learning. To address the challenges of large-scale graph data, Dink-Net~\citep{liu2023dink} integrates representation learning with clustering optimization through dilation-shrink mechanisms and uses mini-batches for scalability. To eliminate the need for a predefined cluster number, RGC~\citep{liu2023reinforcement} employs reinforcement learning to jointly learn representations and adaptively adjust cluster numbers for better cohesion and separation. To generalize graph clustering to heterophilic scenarios, DGCN~\citep{pan2023beyond} and DMGC~\citep{guo2025disentangling} reconstructs homophilic and heterophilic graphs with a mixed filter that captures multi-frequency features. Additionally, GCLR~\citep{trivedi2024large} and MARK~\citep{fu2025mark} leverage large language models to enhance text-attributed graph clustering. Despite these advancements, a common challenge remains: the inability to capture long-range dependencies in graphs and the neglect of inherent redundant information and noise in data. These factors are crucial to capturing cohesive clusters in complex graph structures. By learning robust and compact node representations in our method \method{}, it is promising to uncover meaningful patterns and communities within the graph.

\subsection{Global Dependencies in GNNs}
Despite the success of GNNs in various graph-structured applications, their message-passing mechanism~\citep{gilmer2017neural} usually restricts them from capturing long-range dependencies. To overcome this, a growing body of research has been dedicated to enhancing GNNs with global dependency capabilities, which can be roughly categorized into two lines: \emph{implicit global interaction} and \emph{explicit structural augmentation}. The first group keeps the original graph unchanged and introduces model-level global interactions, enabling node representations to exchange information globally in feature space~\citep{ying2021transformers,zhang2022hierarchical,cai2023connection,liang2023knowledge,xing2024less,yi2025hypergraph}. For example, Graphormer~\citep{ying2021transformers} leverages structural encodings within a Transformer framework to effectively capture graph dependencies, while CoBFormer~\citep{xing2024less} adopts a bi-level architecture and collaborative training to more effectively capture global dependencies while retaining important local information. The second group focuses on directly modifying the graph’s connectivity by adding `shortcuts’ to physically shorten the distance between distant nodes, enabling standard GNNs to more flexibly capture global relationships~\citep{wu2022nodeformer,guo2024graphedit,ju2024hypergraph,yi2025learning,li2025heal}. Specifically, GraphEdit~\citep{guo2024graphedit} leverages large language models to learn global node-wise dependencies, denoise connections, and enhance graph structure learning through instruction-tuned reasoning. In our work, we follow the second paradigm by constructing a graph diffusion matrix to capture global dependencies, effectively modeling indirect connections and enabling clusters to accurately represent underlying community structures.

\subsection{Noise-Resilient Learning in GNNs}
GNNs are highly sensitive to various types of noise~\citep{ju2024survey,ju2025survey}, including feature perturbations and redundant information, which can distort feature distributions and degrade performance. Existing noise-resilient learning approaches can be broadly divided into two lines: \emph{adversarial defense} and \emph{loss refinement}. The first line explicitly defends against malicious or worst-case perturbations by simulating or constraining adversarial noise~\citep{feng2019graph,alchihabi2023efficient,yuan2023alex}. For instance, BVAT~\citep{deng2023batch} employs batch virtual adversarial training to counteract noise and better capture local graph structures, while GCORN~\citep{abbahaddou2024bounding} enhances robustness against node feature attacks by enforcing orthonormal weight matrices. The second line enhances robustness by modifying training objectives to handle noisy or redundant inputs~\citep{wang2022bayesian,huo2023t2,yuan2023learning,ju2026identifying}. Specifically, BRGCL~\citep{wang2022bayesian} iteratively estimates confident nodes to compute robust cluster prototypes and applies prototypical contrastive learning to learn noise-resistant node representations, while T2-GNN~\citep{huo2023t2} uses a dual teacher-student framework to transfer clean patterns and recover noisy or corrupted node attributes. Unlike adversarial or loss refinement approaches, we introduce the idea of low-rank learning to help the model learn compact representations for graph clustering, as it explicitly models the underlying low-dimensional structure, and separate noise and redundancy at the data-distribution level, thereby producing more robust and more clustering-friendly representations.

\end{document}